\def\BibTeX{{\rm B\kern-.05em{\sc i\kern-.025em b}\kern-.08emT\kern-.1667em\lower.7ex\hbox{E}\kern-.125emX}}
\pgfplotsset{compat=1.9}
\DeclareMathAlphabet{\mathcal}{OMS}{cmsy}{m}{n}
\newcommand{\bnm}{\begin{newmath}}
\newcommand{\enm}{\end{newmath}}
\newcommand{\bea}{\begin{eqnarray*}}%
\newcommand{\eea}{\end{eqnarray*}}%
\newcommand{\bne}{\begin{newequation}}
\newcommand{\ene}{\end{newequation}}
\newcommand{\bal}{\begin{newalign}}
\newcommand{\eal}{\end{newalign}}
\newenvironment{newalign}{\begin{align}%
\setlength{\abovedisplayskip}{4pt}%
\setlength{\belowdisplayskip}{4pt}%
\setlength{\abovedisplayshortskip}{6pt}%
\setlength{\belowdisplayshortskip}{6pt} }{\end{align}}
\newenvironment{newmath}{\begin{displaymath}%
\setlength{\abovedisplayskip}{4pt}%
\setlength{\belowdisplayskip}{4pt}%
\setlength{\abovedisplayshortskip}{6pt}%
\setlength{\belowdisplayshortskip}{6pt} }{\end{displaymath}}
\newenvironment{newequation}{\begin{equation}%
\setlength{\abovedisplayskip}{4pt}%
\setlength{\belowdisplayskip}{4pt}%
\setlength{\abovedisplayshortskip}{6pt}%
\setlength{\belowdisplayshortskip}{6pt} }{\end{equation}}
\newcounter{ctr}
\newcounter{mytable}
\def\mytable{\begin{centering}\refstepcounter{mytable}}
\def\endmytable{\end{centering}}
\newcounter{myfig}
\def\myfig{\begin{centering}\refstepcounter{myfig}}
\def\endmyfig{\end{centering}}
\newlength{\saveparindent}
\newlength{\saveparskip}
\newcommand{\E}{{\rm I\kern-.3em E}}
\newcommand{\true}{\mathsf{true}}
\newcommand{\false}{\mathsf{false}}
\renewcommand{\eqref}[1]{\mbox{Equation~(\ref{#1})}}
\def \part {part}
\DeclareMathOperator*{\argmin}{argmin}
\renewcommand{\paragraph}[1]{\vspace*{6pt}\noindent\textbf{#1}\;}
\def \blackslug{\hbox{\hskip 1pt \vrule width 4pt height 8pt
    depth 1.5pt \hskip 1pt}}
\def \qed{\quad\blackslug\lower 8.5pt\null\par}
\newcounter{mynote}[section]
\newcommand\ignore[1]{}
\newcounter{rcnote}[section]
\newcounter{mrnote}[section]
\newcounter{fknote}[section]
\newcounter{anote}[section]
\DeclareMathSymbol{\mlq}{\mathord}{operators}{``}
\DeclareMathSymbol{\mrq}{\mathord}{operators}{`'}
\newcommand{\rhf}[2]{R_{f, \gamma}}
\DeclareDocumentCommand{\edist}{o o}{
  \ensuremath{
    \IfNoValueTF{#1}{{d}}{{\sf d}(#1,#2)}
  }
}
\newcommand{\olrk}[1]{\ifx\nursymbol#1\else\!\!\mskip4.5mu plus 0.5mu\left(\mskip0.5mu plus0.5mu #1\mskip1.5mu plus0.5mu \right)\fi}
\NewDocumentCommand{\indseq}{ O{1} O{r} }{{#1}\ldots {#2}}
\newcommand{\hybrid}{\mathcal{H}}
\newcommand{\val}{{\mathsf{val}}}
\newcommand{\reals}{\mathbb{R}}
\newcommand{\bool}{\mathbb{B}}
\newcommand{\init}{q_\mathtt{Init}}
\newcommand{\goal}{\mathtt{Goal}}
\begin{document}
%\fontfamily{lmr}\selectfont
% \def\thetitle{A Practical Way to Generate Strong Keys from Noisy Data}
\fancyhead{}
\def\thetitle{Optimal Mixed Discrete-Continuous Planning \\ for  Linear Hybrid Systems}
\title{\thetitle}

\author{Jingkai Chen}
\affiliation{jkchen@csail.mit.edu \\
Massachusetts Institute of Technology}

\author{Brian C. Williams}
\affiliation{williams@csail.mit.edu \\
Massachusetts Institute of Technology}

\author{Chuchu Fan}
\affiliation{chuchu@mit.edu \\
Massachusetts Institute of Technology}

\begin{abstract}
Planning in hybrid systems with both discrete and continuous control variables is important for dealing with real-world applications such as extra-planetary exploration and multi-vehicle transportation systems. Meanwhile, generating high-quality solutions given certain hybrid planning specifications is crucial to building high-performance hybrid systems. However, since hybrid planning is challenging in general, most methods use greedy search that is guided by various heuristics, which is neither complete nor optimal and often falls into blind search towards an infinite-action plan. In this paper, we present a hybrid automaton planning formalism and propose an optimal approach that encodes this planning problem as a Mixed Integer Linear Program (MILP) by fixing the action number of automaton runs. We also show an extension of our approach for reasoning over temporally concurrent goals. By leveraging an efficient MILP optimizer, our method is able to generate provably optimal solutions for complex mixed discrete-continuous planning problems within a reasonable time. We use several case studies to demonstrate the extraordinary performance of our hybrid planning method and show that it outperforms a state-of-the-art hybrid planner, Scotty, in both efficiency and solution qualities.
\end{abstract}

% \ccsdesc[500]{Human-centered computing~Human computer interaction (HCI)}
% \ccsdesc[300]{Human-centered computing~Haptic devices}
% \ccsdesc[100]{Human-centered computing~User studies}

\keywords{Linear Hybrid Systems, Hybrid Planning, Optimization}

\maketitle

\section{Introduction}

Hybrid systems are a powerful modeling framework to capture both the physical plants and embedded computing devices of complex cyber-physical systems. When planning the desired behaviors of a hybrid system, we have to consider both the discrete actions taken by the computing units and the continuous control inputs for the physical actuators. This poses unique and significant challenges in planning for hybrid systems, as one has to consider the change of dynamics of the continuous flow by the control inputs, the interleaves of continuous flows and the discrete transitions between modes, resets associated with transitions, and concurrently running agents in multi-agent systems. 

Planning mixed discrete-continuous strategies in hybrid systems is theoretically difficult: on the discrete side, planning with numeric state variables has been proved to be undecidable~\cite{helmert2002decidability}; on the continuous side, computing the exact unbounded time reachable states for hybrid systems is also a well-known undecidable problem~\cite{henzinger1998s}. Nevertheless we are interested in high-quality solutions with the shortest makespan or lowest energy consumption, which is crucial to designing high-performance systems.

\begin{figure}[t]
\centering
\includegraphics[width=0.95\columnwidth]{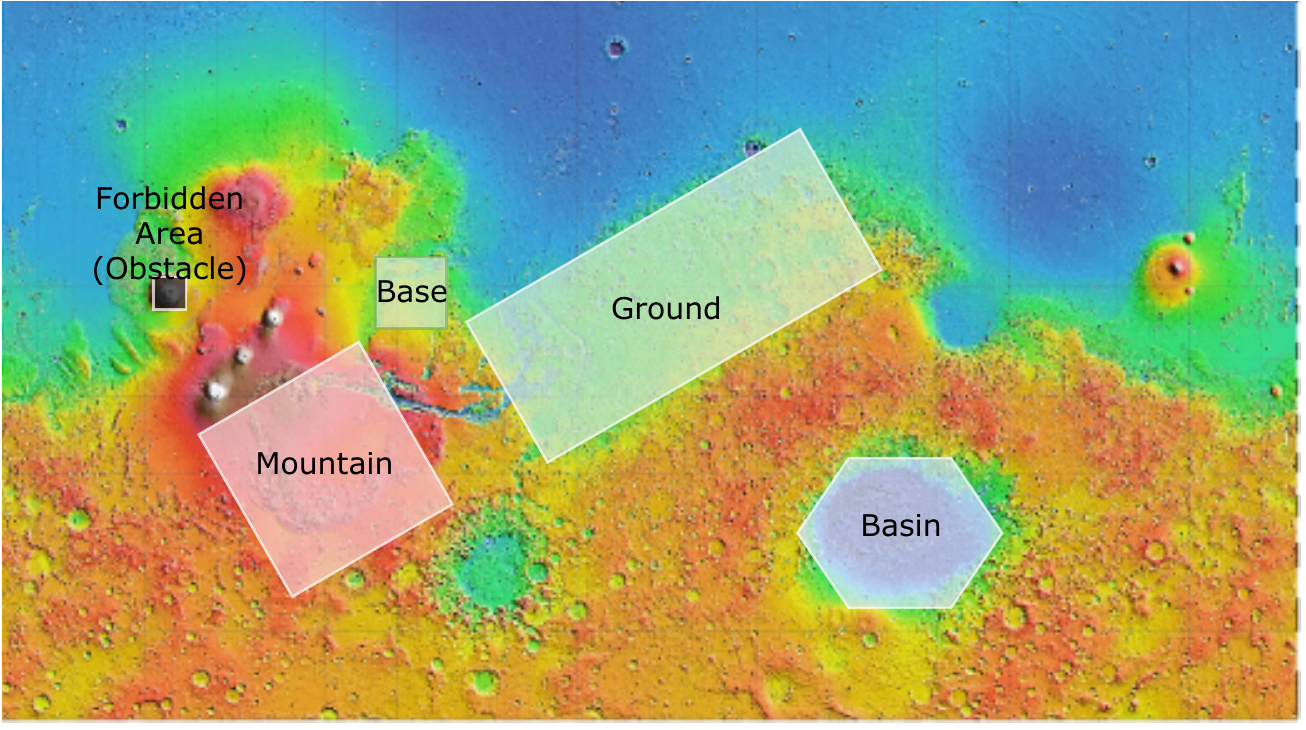}
\caption{\small Map of Mars.}
\label{fig:mars}
\end{figure} 

\begin{figure}[t]
\centering
\includegraphics[width=0.9\columnwidth]{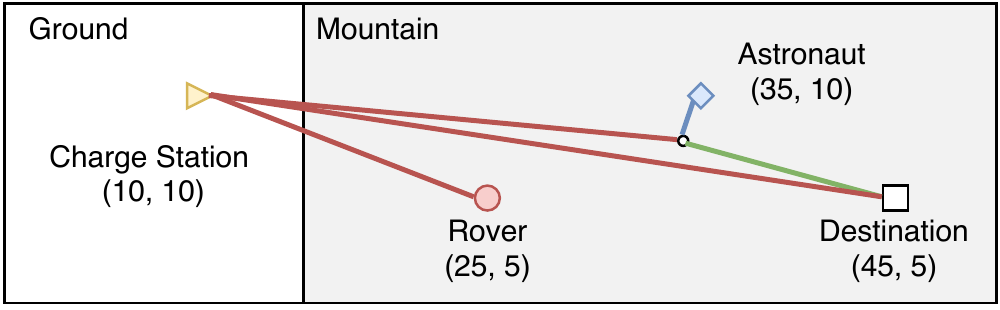}
\caption{\small Example of Mars transportation problems: the charge station is marked as $\triangleright$, and its position is at (10,10); the rover $\circ$ is at (25,5); the astronaut $\diamond$ is at (35,10); and the destination $\Box$ is at (45,5). In the example solution, the rover path is in red, the astronaut walking path is in blue, and the astronaut taking a ride is in green.}
\label{fig:map}
\end{figure} 

\paragraph{Motivating Example} Consider a task where an astronaut should go to an observation location by crossing different terrains (e.g., mountain, ground, and basin) on Mars, and a rover needs to go to the charge station, as shown in Figure~\ref{fig:map}. The astronaut can either walk or take a rover. The moving speed of the rover is much faster than the walking speed of the astronaut. The rover is powered by a battery. This battery can be charged when the rover is stopped in a charge station and should always have the remaining battery outside of the charge station. While the rover and astronaut should not enter the forbidden areas, the rover can move through different terrains with different velocity limits and energy consumption rates. After the rover is manually shut down, it cannot restart within 1 minute. In this mission, plans with shorter makespans are preferred.

A sound solution to this problem is that the astronaut moves directly to the destination, and the rover moves directly to the charge station without picking up the astronaut. However, in this plan, the rover is not used at all. If the rover has enough battery, it can deliver the astronaut to the destination first and then go to the charge station, which saves a lot of time for the astronaut. Unfortunately, energy is always limited in reality. Intuitively, a better solution is that while the astronaut is moving towards the rover, the rover moves to the charge station for charging, then picks up and delivers the astronaut to the destination, and finally returns to the charge station. As the rover moves much faster than the astronaut, this plan requires much less time than letting the astronaut walk to the destination. Another possible faster solution is that the rover picks up and delivers the astronaut somewhere midway to the destination. Then, the astronaut walks to the destination while the rover is moving back to the charge station for a charge.

In this paper, we adopt the hybrid I/O automaton \cite{lynch1995hybrid} framework to model the mixed discrete-continuous planning problems, which is expressive enough to capture all the mentioned features such as discrete and continuous input and state variables, linear dynamics for continuous flows, and guards and resets for discrete transitions. The crucial technique to make the mixed discrete-continuous optimal planning possible is the introduction of finite-step runs. A step is defined either as a discrete jump or as a set of continuous flows in a mode where we need to compute the dwell time for staying in this mode. We prove that the optimal solution (also called a run) with finite steps will converge to the optimal solution of the original hybrid system when the number of steps goes to infinity. By tailoring the automaton to admit only finite-step runs, we show that finding the optimal input such that the corresponding run has a minimum makespan can be encoded as a Mixed Integer Linear Program (MILP), which can be solved efficiently using off-the-shelf MILP solvers, such as Gurobi~\cite{gurobi2020gurobi}.

To encode our mixed discrete-continuous planning problem as a MILP, we draw inspiration from the idea of modeling flow tubes \cite{hofmann2006robust} as linear programs \cite{fernandez2018scottyactivity}. We further extend these linear programs to incorporate discrete decisions to model discrete variables and actions. Our complexity analysis shows that the number of MILP variables and constraints at each step increases linearly with the product of the number of linear constraints in each condition and the number of operators and variables. To accelerate MILP solving, we introduce two types of additional constraints about conflicting operators. We also show that our solution approach is able to plan for the temporally concurrent goals known as Qualitative State Plans (QSPs) \cite{hoffmann2001ff}, which describe the desired system behavior over continuous time as tasks with time windows.

To demonstrate the efficiency and solution qualities of our method, we benchmarked against the Scotty planning system \cite{fernandez2018scottyactivity} on three domains: Mars transportation, air refueling, and truck-and-drone delivery. In addition to dealing with different dynamics under a large number of modes, all these three domains require judiciously coordinating heterogeneous agent teams for cooperation and carefully reasoning over resources to decide necessary recharging or refueling.  The experimental results show that our approach can find high-quality solutions for all the problems in seconds and provide optimality proof for most examples, while Scotty fails to solve half of the problems within 600 seconds. Moreover, the makespans of our first solutions returned within $1$ second are already better than those of Scotty, and our final solutions can significantly improve them.

The remainder of this paper is organized as follows. We start by discussing the related work in both planning and controller synthesis (Section~\ref{section:related}). In Section~\ref{section:formulation}, we give the formal definition of our hybrid planning problem as well as a formulation of our motivating example. In Section~\ref{section:fixed}, we introduce a tractable variant of the hybrid planning problem by fixing the action number of automaton runs, which leads to a finite-step linear hybrid planning problem. In Section~\ref{section:milp}, we present our MILP encoding of this finite-step linear hybrid planning problem. Then, we introduce our extension to deal with temporally concurrent goals in Section~\ref{section:multiple}. Section~\ref{section:results} shows the results of benchmarking our method against the Scotty Planning system on three challenging domains. Finally, concluding remarks are discussed in Section~\ref{section:conclusion}.

\section{Related work}\label{section:related}

Planning~\cite{li2008generative,della2012universal,bryce2016happening,fernandez2018scottyactivity,cashmore2016compilation,bit2019cyber} and controller synthesis~\cite{LTLMOP,paulobook,Girard12,MajumdarMS16,KloetzerB08,Tulip-short,KloetzerB08,WongpiromsarnTM12,TabuadaP06,fan2018controller,raman2015reactive,herbert2017fastrack,vaskov2019towards} methods intersect at finding the (optimal) strategies for various system models with respect to different system specifications. In what follows, we briefly mention a couple of representative approaches that are related, without exhaustively listing all approaches in each category. 

\textit{Discrete abstraction-based synthesis.}
Discrete abstraction-based synthesis methods first compute a discrete, finite-state abstraction of control systems and then synthesize discrete controllers based on automaton theories or two-player games~\cite{paulobook,Girard12,MajumdarMS16,KloetzerB08,Tulip-short,KloetzerB08,WongpiromsarnTM12}. Synthesis tools based on abstraction such as 
CoSyMA~\cite{Cosyma},
Pessoa~\cite{pessoa}, LTLMop~\cite{LTLMOP,LTLMOP2},
Tulip~\cite{Tulip-short,FilippidisDLOM16}, and SCOTS~\cite{rungger2016scots} can support complex nonlinear systems, stochastic uncertainties, or non-deterministic transitions \cite{rungger2016scots,wongpiromsarn2011tulip,filippidis2016control,vidal2001decidable,plaku2013falsification,laurenti2020formal}, and general GR(1)~\cite{wongpiromsarn2011tulip} or Signal Temporal Logic~\cite{raman2015reactive} specifications. Our problem may be solved using abstraction-based synthesis using temporal logic specifications. However, none of the above tools can be used directly on our general linear hybrid system with both discrete input/state signals and guards/resets in transitions. Moreover, our approach aims at finding high-quality solutions with low costs or high rewards over long horizons instead of finding all valid solutions. In fact, our planning approach is efficient and effective at finding high-level plans, which is complementary and can be combined with the controller synthesis algorithms for achieving autonomy in  complex hybrid systems. 
% , and integrating our approach with these synthesis algorithms together is promising to render the next generation of hybrid planning-execution system to be fast, effective, and safe.

% \textit{Sampling based planning.}
% Sampling-based methods such as 
% Probabilistic Road
% Maps (PRM)~\cite{kavraki1994probabilistic}, Rapidly-exploring Random Trees (RRT)~\cite{kuffner2000rrt}, fast marching tree (FMT)~\cite{janson2015fast}, and hybrid system planner~\cite{lahijanian2014sampling} are widely used in searching plannings in nonconvex, high-dimensional, and even partially unknown environments. Compared with the deterministic guarantees provided by controller synthesis methods and our approach, these methods come only with stochastic guarantees.

\textit{Sampling-based planning.}
Sampling-based methods such as Probabilistic Roadmaps (PRM)~\cite{kavraki1994probabilistic}, Rapidly-exploring Random Tree (RRT)~\cite{kuffner2000rrt}, Fast Marching Tree (FMT)~\cite{janson2015fast}, and hybrid automata planner~\cite{lahijanian2014sampling} are widely used for searching for plans in nonconvex, high-dimensional, and even partially unknown environments. Researchers also combine the PRM sampling method with classical planning for solving task-and-motion planning problems, which involves both continuous motions and discrete tasks \cite{kaelbling2011hierarchical,lagriffoul2018platform,garrett2015ffrob,garrett2017sample,garrett2018ffrob}. Compared with the deterministic guarantees provided by controller synthesis methods and our approach, these methods come only with probabilistic guarantees.

\textit{Hybrid planning.}
The planning problems with a subset of the features considered in this paper (i.e., mixed discrete-continuous models, continuous control variables, autonomous transitions, indirect effects, and concurrency) can be of the categories PDDL2.1\cite{fox2003pddl2}, PDDL-S \cite{fernandez2018scottyactivity}, or  PDDL+\cite{fox2006modelling}. Some SMT-based PDDL+ planners such as SMTPlan+ \cite{cashmore2016compilation} and dReal \cite{bryce2015smt} are complete given a finite number of fixed time steps. However, PDDL+ does not support control variables and these planners solve different problems from ours. As most of the solution approaches to PDDL2.1 and PDDL-S  \cite{hoffmann2003metric,coles2012colin,fernandez2017mixed} use greedy search methods such as the enforced-hill climbing, they are neither complete nor optimal even with finite steps. Moreover, most of their heuristics belong to the Metric-FF family \cite{hoffmann2003metric}, which are known to suffering from resource persistence or cyclical resource transfer \cite{coles2013hybrid} when indirect effects or obstacles are present. Thus, none of these heuristics can handle all of these problem features and often lead the greedy search to be blind in certain domains. These motivate us to develop an effective method that is guaranteed to provide high-quality solutions for mixed discrete-continuous planning problems with various features. 

Note that hybrid planning problems are closely related to the formalism of hybrid automata studied in model checking \cite{henzinger1998s,lynch1995hybrid}, which can be found in \cite{fox2006modelling}. In addition, researchers put many efforts into translating PDDL+ to hybrid automata \cite{bogomolov2014planning,bogomolov2015pddl+}, which leverages the advanced hybrid model checking tools \cite{cimatti2000nusmv,frehse2011spaceex,frehse2008phaver} to efficiently prove plan non-existence. Our method directly plans for hybrid automata instead of any PDDL extension. These two representations can be translated to each other since snap actions are basically jumps, and the overall conditions and effects of durative actions are basically flows. By using jumps and flows instead of durative actions, we are able to have a clean MILP encoding for hybrid planning problems.

Among all the hybrid extensions of PDDL, the problems this paper aims at are most relevant to PDDL-S since it is the only planning formalism that supports continuous control variables over time \cite{fernandez2018scottyactivity}.  Kongming \cite{li2008generative} is the first planner that is able to solve PDDL-S, and then a more scalable planner Scotty \cite{fernandez2018scottyactivity} was developed. Scotty is able to efficiently solve complex underwater exploration problems, and it is the current state-of-the-art PDDL-S planner. The reasons for its efficiency are: (1) Scotty encodes the cumulative effect of each control variable as a single variable, which renders a clean convex optimization problem for plan validation, which are called flow tubes \cite{hofmann2006robust}; (2) It uses the temporal relaxed planning graph heuristics (i.e., delete relaxation) \cite{coles2012colin} to guide its greedy search.  Our method is inspired by its cumulative effect encoding and extends its optimization problem to handle discrete decisions and nonconvex conditions. By using such an encoding, we do not need to discretize the timeline with a fixed time step as \cite{li2008generative} or discretize control parameters as \cite{coles2012colin}. Meanwhile, our solution approach avoids the incompleteness and suboptimality caused by Scotty's greedy search.

\section{Problem Formulation}\label{section:formulation}
We use a linear hybrid automaton with inputs as the model for our system and then define the hybrid planning problem on this automaton.
\begin{definition}[Linear Hybrid Automaton] 
\label{def:HA}
A linear hybrid automaton with inputs is a tuple $\hybrid = \langle V = (Q \cup E), \init, \goal, J, F \rangle $:
\begin{itemize}
    \item $Q = L \cup X$ is the set of {\em internal} variables, which are also called {\em state} variables. $L$ is the set of discrete state variables, the values of which $\val(L)$ are taken from finite sets called modes. $X$ is the set of continuous state variables, the values of which $\val(X)$ are taken from continuous sets over the reals. We call $\val(L) = \val(L) \times \val(X)$ internal state space.
    
    \item  $E$ is the set of {\em external} variables, which are also called {\em input} variables. External variables could also contain discrete and continuous variables, which are defined analogously to the internal variables. 
    
    \item $\init \in \val(L) \times \val(X)$ is an initial state, and $\goal$ is a predicate that represents a set of goal states.

    \item $J$ is the set of {\em jumps}. A jump $j \in J$ is associated with a {\em condition} $\textit{cond}$ and an {\em effect} $\textit{eff}$. The condition $\textit{cond}$ is a {\em predicate} over $V$, where a predicate is a computable Boolean-valued function $\textit{cond} : \val(V) \rightarrow \bool$ that maps the values of the variables $V$ to either $\true$ or $\false$. The condition is also known as the guard condition or the enabling condition of the jump.
    An effect $\textit{eff}: V \rightarrow Q$ specifies how the value of the state variables changes when the jump occurs. It assigns new values to the variables in $Q$.
    The variables that are not mentioned in the effect statements are assumed to remain unchanged.
    
    \item $F$ is the set of {\em flows} for the state variables $X$. $F_k \subseteq F$ is the set of flows for $X_k \subseteq X$, where $\{X_0,X_1,..,X_K\}$ is a set of disjoint continuous variable sets such that $\cup_{k} X_k = X$ . A flow $f \in F_k$ is associated with a differential equation $\dot X_k = A_k E + B_k$ and a condition $\textit{cond}$ over $V$, where $A_k, B_k$ are constant matrices, and $\textit{cond}$ is defined in the same way as in jumps that specifies when a flow $f$ is activated.
    At each time, multiple flows $f \in F$ can be activated with exactly one flow $f_k$ from each $F_k$. That is, there will always be a set of flows, which together specify the evolution of the continuous internal variables $X$ as linear differential equations. We call such a set of flows the {\em flow set} at each time, and it belongs to the power set of $F$. During the time when a flow is activated, the values of discrete state variables stay the same.
    
    % \item $F$ is a set of {\em flows} for the state variables $X$. $F(\Pi) \subseteq F$ is the set of flows for $\Pi \subseteq X$. A flow $f \in F(\Pi)$ is associated with a differential equation $\dot \Pi = AE+B$ and a condition $\textit{cond}$ over $V$, where $A, B$ are constant matrices, and $\textit{cond}$ is defined in the same way as in jumps that specifies when a flow $f$ is activated.
    % At each time, multiple flows $f_k(\Pi_k) \in F$ can be activated, where each $\Pi_k$ is a subset of $X$ and contains non-overlapping variables with each other. We denote this disjoint partition of $X$ as $\Pi = \{\Pi_0,\Pi_1,.,\Pi_K\}$ and $\cup_{k} \Pi_k = X$. That is, at each time, there will always be a set of flows, which together specify the evolvement of the continuous internal variables $X$ as linear differential equations. We call such a set of flows the {\em flow set} at each time.
    % A {\em flow set} is a set of flows including exactly one flow $f_{\Pi} \in T(\Pi)$ for each $\Pi \subseteq X$. 
\end{itemize}
\end{definition}

% \begin{definition}[Linear Hybrid Planning Problem] A linear hybrid planning problem is a tuple $\Pi = \langle V = (Q \cup E), \init, \goal, A, F \rangle $,
% where
% \begin{itemize}
%     \item $Q = L \cup X$ is the set of {\em internal} variables, which are also called {\em state} variables. $L$ is the  discrete state variables, the values of which $\val(L)$ are taken from finite sets called modes, and $X$ is the set of continuous state variables, the values of which $\val(X)$ are taken from continuous sets over the reals.
    
%     \item  $E$ is the set of {\em external} variables, which are also called {\em input} variables. External variables could also contain discrete and continuous variables.
    
%     \item $\init$ and $\goal$ are the initial state a set of goal states, which are all defined over $Q$.
    
%     \item $J$ is the set of {\em jumps}. A jump is associated with a condition $\textit{cond}$ over $V$ and a set of effects $\textit{eff}$ over $Q$.
    
%     \item $F$ is a set of {\em flows} and $F(\Pi) \subseteq F$ is a set of flows for $\Pi \subseteq X$. A flow $f \in F(\Pi)$ is associated with a differential equation $\dot \Pi = AE+B$ and a condition $\textit{cond}(f)$ over $V$ that specifies when a flow is activated.
%     A {\em flow set} is a set of flows including exactly one flow $f_{\Pi} \in T(\Pi)$ for each $\Pi \subseteq X$. 
% \end{itemize}
% \end{definition}

Note that $\val(Q)$ also defines the invariant set of the internal variables, where $\val(X)$ could be nonconvex. Therefore, we can avoid defining the unsafe set separately.

Without loss of generality, we use an integer variable with domain $\{0,1,..,|\val(v)|-1\}$ to replace a discrete variable $v \in V$, where $|\val(v)|$ is the number of the elements in $\val(v)$. Thus, we can further assume that all conditions, initial states, and goals are represented as a propositional sentence of liner constraints:
\begin{equation}\label{eq:formula}
    \phi ::= \true \mid (GV \geq H) \mid % \neg
  \phi_1 \mid \phi_1 \wedge
  \phi_2 \mid \phi_1 \vee
  \phi_2,
\end{equation} 
where $G \in \reals^{|V|}$ is a $|V|$-vector of real values and $H \in \reals$ is a real value. This propositional sentence of linear constraints can represent both convex and nonconvex regions defined by linear inequalities over both integer and continuous variables. Effects can be also represented by \eqref{eq:formula} except its linear constraints involve both $V = Q \cup E$, which are the state and input variables before taking the effects, and $Q'$, which is the state variables after taking the effects.

\begin{example} 
To formulate our motivating example, we define two discrete internal variables: the astronaut $\texttt{LA} \in \{\texttt{0},\texttt{1}\}$ has modes $\texttt{Walking(0)}$ and mode $\texttt{Riding(1)}$; the rover $\texttt{LR} \in \{\texttt{0},\texttt{1},\texttt{2}\}$ has $\texttt{Driving(0)}$, $\texttt{Stopped(1)}$ and $\texttt{Charge(2)}$.

Internal continuous variables $\texttt{pA} \in [0,50]\times[0,30]$ represent the astronaut's position, and $\texttt{xR} \in [0,50]\times[0,30]\times[0,30] \times [0,\infty)$ includes the rover's position $\texttt{pR} \in [0,50]\times[0,30]$, battery level $E \in [0,30]$, and an internal clock $c \in [0, \infty)$. $\texttt{pRx}$ and $\texttt{pRy}$ are the rover's positions over the x-axis and the y-axis, respectively.  The initial state is $\texttt{LA}=\texttt{0},\texttt{LR}=\texttt{1},\texttt{pA}=(35,10),\texttt{xR}=(25,5,10,0)$ as shown in Figure~\ref{fig:map}. 
% We use $^$ to restrict for using a subset of continuous variables such as the position of rover is $\texttt{xR} ^ \texttt{pR}$. When the context is clear, we also omit $\downarrow$ and use $\texttt{pR}$.  

This system also takes commands as input variables, including discrete input variables $\texttt{cmdA} \in \{0,1\}$,  $\texttt{cmdR} \in \{0,1,2\}$, and continuous input variables $\texttt{vA} \in [-0.2,0.2]^2$, $\texttt{vR} \in [-5,5]^2$ represent velocities.%. The jumps and flows of this problem are given as follows:

\lstinputlisting[]{model}
\end{example}

While flow sets can only change continuous state variables, jumps can change both discrete and continuous state variables. The conditions for both jumps and flows would depend on all variables (i.e., including state and input variables). While we call the union of jumps and flows $J \cup F$ as \textit{operators} $O$, we call both jumps and flow sets $ J \cup 2^{F}$ as \textit{actions} $A$. We use $\texttt{cond}_a$ to denote the set of states $v \in \val(V)$ such that the condition associated with the action $a$ is true: $\phi(v) = True$. 

% We also use $\texttt{eff}_a(q,e)$ to denote the changed value of $q$ after taking a jump $a \in J$ with input $e$.

% As the condition and effects of jump $o$ are given in the model as $\textit{cond}(o)$ and $\textit{eff}(o)$, these flow sets depend on the elapsed time and the external variable assignment. 
Given a flow set $a  = \cup_k f_k \in 2^{F}$, we denote the derivative of $X$ as $AE+B$. Such $A$ and $B$ can be easily constructed from the differential equations for each flow $X_k = A_k E + B_k$. 
If at the beginning of the flow the value of $X$ is $x_0$, and the elapsed time of such flow is $\delta$, then the $X$'s value would be updated as 
$x \gets x_0 + A \Delta + B\delta$, where $\Delta = \int_{0}^{\delta} E dt$ is the cumulative effects of $E$ during $d$. 

An input signal is a function $e: [0, \infty) \rightarrow \val(E)$, which specifies the value of the input variables at any time $t \geq 0$.
Once an input signal is fixed, a run of the hybrid automaton is defined as follows:

\begin{definition}
\label{def:run}
Given a linear hybrid automaton $\hybrid = \langle V = (Q \cup E), \init, \goal, J, F \rangle $ and an input command $e: [0, \infty) \rightarrow \val(E)$, a {\em run}~\footnote{We assume that in our run Zeno behaviors are not allowed. That is, we do not allow an infinite number of jumps to occur in a finite time interval.} of $\hybrid$ is defined as a sequence of internal states $q_0,\cdots,q_n \in \val(L) \times \val(X)$:
\[
\xi_{\hybrid,e} = q_0 \xrightarrow{a_0, \delta_0} q_1 \cdots,q_{n-1} \xrightarrow{a_{n-1}, \delta_{n-1}} q_n,
\]
such that
\begin{enumerate}
    \item $q_0 \in \init$ and $q_n \in \goal$.
    \item $a_0,\cdots,a_{n-1}$ are actions. Let $t_i = \sum_{j=0}^{i-1}\delta_j$ be the accumulated time associated with $q_i$ for each $i = 0,\cdots,n$, then: (a) if $a_i \in J$ is a jump, then $\delta_i = 0$, $(q_i, e(t_i)) \in \mbox{cond}(a_i)$, and $q_{i+1} = \mbox{eff}(q_i, e(t_i))$; (b) if $a_i \in 2^F$ is a flow set, then $\delta_i \geq 0$, $(q_i, e(t_i)) \in \mbox{cond}(a_i)$, $x_{i+1} = x_i + A \int_{t_i}^{t_{i+1}} e(\tau) d\tau + B \delta_i$, $\ell_{i+1} = \ell_{i}$ where $q_i = (\ell_i, x_i)$. Moreover, between the time $t \in [t_i, t_{i+1})$, $(q(t),e(t))$ should always satisfy cond($a_i$). 
        % That is, $\forall t \in [t_i, t_{i+1}]$, 
        % $x(t) = x_i + A \int_{t_i}^{t-t_i} e(\tau) d\tau + B (t-t_i) \notin \mbox{cond}(a)$ for any $a \in \{A \setminus a_i\}$.
\end{enumerate}
We also denote the total time $\sum_{i=0}^{n-1} \delta_i$ of a run $e$ as $\xi_{\hybrid,e}.\mathtt{TotalTime}$. Note that although $e$ is defined on the infinite time horizon $[0,\infty)$, we do not need to have the value for $e(t)$ when $t > \xi_{\hybrid,e}.\mathtt{TotalTime}$
\end{definition}

Now given a linear hybrid automaton with inputs, we can define the planning problem as finding an input signal whose run has the minimum makespan.
\begin{definition}\label{def:solution}
Given a linear hybrid automaton $\hybrid = \langle V = (Q \cup E), \init, \goal, J, F \rangle $, the planning problem is to find a the optimal input $e^*$ signal so the corresponding solution's makespan is minimized:
\[
e* = \argmin_{e} \xi_{\hybrid,e}.\mathtt{TotalTime}
\]
\end{definition}

\section{finite-step Decision Problem}\label{section:fixed}

Solving the planning problem (Definition~\ref{def:solution}) to get the optimal input signal $e^*$ needs to reason over all possible $e(t)$. For most hybrid automaton, this is intractable, as the unbounded-time reachability problem is undecidable even for rectangular hybrid automaton~\cite{henzinger1998s}, which is a simpler hybrid automaton than ours with the right-hand side of the differential equations containing only constants.

Essentially, to solve the optimal $e^*$, we need to assign values of all input variables for infinitely many $t$. To make this problem solvable, we fix the number of actions allowed in the run of the hybrid automaton and simplify the original problem by searching for $e(t)$ that corresponds to each action. We introduce a {\em fixed-step linear hybrid automaton} to capture such an idea.

\begin{definition}
A {\em finite-step linear hybrid automaton with input} $\hybrid_n$ is a linear hybrid automaton $\hybrid$ (as defined in Definition~\ref{def:HA}) with all runs of $\hybrid$ to have exactly $n$ actions.
\end{definition}

% While most hybrid automata is undecidable in general, finding an input signal such that its corresponding run exists even for our linear hybrid automata $\hybrid$ falls into this class of undecidable problems and thus challenging. As the linear hybrid planning problem is to find a run with the minimum makespan, the problem becomes much harder. Thus, in order to obtain a tractable problem, we fix the action number of all the runs in $\hybrid$ and obtain a fixed-step linear hybrid automaton as $\hybrid^n$, which leads to a decidable finite-step decision problem. 

In Section~\ref{section:milp}, we present how to use a MILP encoding to solve the planning problem for fixed-step linear hybrid automaton. Next, we show that for a hybrid automaton $\hybrid$, once we fix the number of actions $n$ and make it $\hybrid_n$, the feasible solution set (the set of input signals $e$ such that $\xi_{\hybrid_n, e}$ is a run of $\hybrid$ with $n$ actions) is non-decreasing as the number of actions $n$ increases.

\begin{lemma}\label{lemma:step}
Let $\hybrid^n$ be a finite-step linear hybrid automaton for $\hybrid$ with fixed action number $n$. Let $\Xi_{\hybrid^n}$ and $\mathcal{E}_{\hybrid^n}$ be all the runs of $\hybrid^n$ and their corresponding input signals, respectively. For any $0 < n' < n$, we have $\mathcal{E}_{\hybrid^{n'}} \subseteq \mathcal{E}_{\hybrid^n}$.
\end{lemma}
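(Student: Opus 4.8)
The plan is to show that any input signal $e$ realizing an $n'$-action run of $\hybrid$ also realizes an $n$-action run, for every $n > n'$. The key observation is that a flow set action with dwell time $\delta_i$ can always be split into two consecutive flow set actions with the \emph{same} flow set and dwell times $\delta_i', \delta_i''$ with $\delta_i' + \delta_i'' = \delta_i$, without changing the input signal, the sequence of visited continuous states (up to the newly inserted intermediate point), or the total time. Since by definition a run has $n \geq 1$ actions and $q_0 \in \init$, $q_n \in \goal$, it suffices to insert $n - n'$ extra actions of this ``trivial'' kind. So first I would argue that any run with at least one action contains at least one flow set action, or else handle the all-jumps case separately by inserting a zero-duration flow set; more cleanly, I would insert a flow set action with the ``stay-in-place'' semantics if needed, but the cleanest route is: take the \emph{last} flow set action in the run (if the run consists only of jumps, prepend a flow set $a = \emptyset \in 2^F$ with $\delta = 0$ at $q_0$, which is legal since an empty flow set imposes the vacuous condition and leaves $X$ unchanged) and split it repeatedly.

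Concretely, given a run
\[
\xi_{\hybrid,e} = q_0 \xrightarrow{a_0, \delta_0} q_1 \cdots q_{n'-1} \xrightarrow{a_{n'-1}, \delta_{n'-1}} q_{n'},
\]
pick an index $i$ with $a_i \in 2^F$ a flow set and $\delta_i \geq 0$. Define $q_i' = (\ell_i, x_i')$ where $x_i' = x_i + A\int_{t_i}^{t_i + \delta_i/2} e(\tau)\,d\tau + B(\delta_i/2)$, i.e., the state reached halfway through the flow. I would then verify the two run conditions for the refined sequence: condition 2(b) for the two new flow segments $q_i \xrightarrow{a_i, \delta_i/2} q_i' \xrightarrow{a_i, \delta_i/2} q_{i+1}$ holds because the integral splits additively ($\int_{t_i}^{t_{i+1}} = \int_{t_i}^{t_i+\delta_i/2} + \int_{t_i+\delta_i/2}^{t_{i+1}}$), $B\delta_i = B(\delta_i/2) + B(\delta_i/2)$, the mode is unchanged in both, and the ``always satisfy $\mathrm{cond}(a_i)$'' requirement on $[t_i, t_{i+1})$ restricts to the two subintervals. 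The accumulated times $t_j$ are unchanged for the original breakpoints and the new breakpoint $t_i + \delta_i/2$ lies in $[t_i, t_{i+1}]$, so no other condition is disturbed, $q_0 \in \init$ and $q_{n'} \in \goal$ still hold, and $\sum \delta_j$ is preserved. Hence $e$ also realizes a run with $n'+1$ actions, and the same input signal $e$ is used throughout, so $e \in \mathcal{E}_{\hybrid^{n'+1}}$.

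Iterating this $n - n'$ times gives $\mathcal{E}_{\hybrid^{n'}} \subseteq \mathcal{E}_{\hybrid^{n}}$, which is the claim (and as a corollary the optimal makespan over $n$-action runs is non-increasing in $n$). The main obstacle — really the only delicate point — is the edge case where the $n'$-action run contains \emph{no} flow set action, so there is nothing to split; here I rely on the fact that the empty flow set $\emptyset \in 2^F$ is a legal action whose differential equation is vacuous (equivalently $A = B = 0$ on no variables) and whose condition is $\true$, so it can be inserted at $q_0$ (or any $q_j$) with dwell time $0$ without affecting $q_0 \in \init$, $q_n \in \goal$, the input signal, or the total time. A second, very minor point to state carefully is that splitting does not require modifying $e$ outside $[0, \xi_{\hybrid,e}.\mathtt{TotalTime}]$, which is immediate since the total time and all $e$-values are untouched. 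Everything else is the routine additivity-of-integrals verification sketched above.
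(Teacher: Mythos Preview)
Your approach is correct and is essentially the paper's: both pad an existing flow segment to go from $n'$ actions to $n'+1$ and then iterate. The paper's variant is marginally simpler---rather than splitting a flow at its midpoint, it appends a zero-duration copy of the same flow set, replacing $q \xrightarrow{a,\delta} q'$ by $q \xrightarrow{a,\delta} q' \xrightarrow{a,0} q'$---but the content is identical. One caveat on your all-jumps edge case: under the paper's definition a flow set must contain exactly one flow from each $F_k$, so $\emptyset \in 2^F$ is not a legal action in general; the paper's proof, for what it is worth, simply leaves this edge case unaddressed.
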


\begin{proof}
For any $e \in \mathcal{E}_{\hybrid^n}$, let $ q \xrightarrow{a, \delta} q'$ be a segment of its run $\xi_{\hybrid, e}$ and $a \in 2^F$ is a flow set. We can replace this segment with $q \xrightarrow{a, \delta} q' \xrightarrow{a, 0} q'$, and the new run is still a run of $\hybrid$ given input signal $e$. As this new run has $n+1$ actions, we prove that $e \in \mathcal{E}_{\hybrid^{n+1}}$ for any $e \in \mathcal{E}_{\hybrid^{n}}$.
\end{proof}

As the original hybrid automaton $\hybrid$ does not fix the action number, we know $\mathcal{E}_{\hybrid}  = \bigvee_{n = 0}^{n=\infty} \mathcal{E}_{\hybrid^n} = \lim_{n \rightarrow \infty}\mathcal{E}_{\hybrid^n}$, which directly follows from Lemma~\ref{lemma:step}.

Let 
\begin{equation}
\label{eq:enstar}
 e_n^* = \argmin\limits_{e \in \mathcal{E}_{\hybrid^{n}}} \xi_{\hybrid_n,e}.\mathtt{TotalTime}.   
\end{equation}
As $\mathcal{E}_{\hybrid^{n'}} \subseteq \mathcal{E}_{\hybrid^n}$, it is easy to check that $\xi_{\hybrid_{n'},e_{n'}^*}.\mathtt{TotalTime} \geq \xi_{\hybrid_n,e_n^*}.\mathtt{TotalTime}$. This gives us the following corollary.

\begin{corollary}
Following Lemma~\ref{lemma:step}, then $\lim\limits_{n \rightarrow \infty} e_n^* = e^*$, where $e_n^*$ is defined as \eqref{eq:enstar}.
\end{corollary}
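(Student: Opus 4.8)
The plan is to read the assertion $\lim_{n\to\infty} e_n^* = e^*$ as a statement about the \emph{values} of the runs, i.e.\ to prove
\[
\lim_{n\to\infty} \xi_{\hybrid_n, e_n^*}.\mathtt{TotalTime} \;=\; \xi_{\hybrid, e^*}.\mathtt{TotalTime},
\]
and to obtain this from Lemma~\ref{lemma:step} together with the union characterization $\mathcal{E}_{\hybrid} = \bigcup_{n} \mathcal{E}_{\hybrid^n}$ that was already derived from it. So the argument is a one-line ``monotone sequence squeezed between the true optimum and itself'' proof, and the work is just in nailing down the two inequalities.

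Concretely I would carry out three steps in order. First, existence of the limit: writing $t_n := \xi_{\hybrid_n, e_n^*}.\mathtt{TotalTime}$, Lemma~\ref{lemma:step} gives $\mathcal{E}_{\hybrid^{n'}} \subseteq \mathcal{E}_{\hybrid^n}$ for $n' < n$, so minimizing over a larger feasible set only decreases the optimum and $t_n$ is non-increasing; it is also bounded below by $0$, hence $t_n \to \inf_n t_n =: t_\infty$. Second, the lower bound $t_\infty \ge \xi_{\hybrid, e^*}.\mathtt{TotalTime}$: since every $\mathcal{E}_{\hybrid^n} \subseteq \mathcal{E}_{\hybrid}$, each $t_n$ is a minimum over a subset of the feasible signals for $\hybrid$, hence $t_n \ge \xi_{\hybrid, e^*}.\mathtt{TotalTime}$, and the inequality passes to the limit. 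Third, the matching upper bound: because Zeno runs are excluded, the run $\xi_{\hybrid, e^*}$ has a finite number of actions, say $N$, so $e^* \in \mathcal{E}_{\hybrid^N}$; by Lemma~\ref{lemma:step} (padding the run with zero-dwell-time repetitions of a flow-set step, which leaves the makespan unchanged) we get $e^* \in \mathcal{E}_{\hybrid^n}$ for all $n \ge N$, whence $t_n \le \xi_{\hybrid_n, e^*}.\mathtt{TotalTime} = \xi_{\hybrid, e^*}.\mathtt{TotalTime}$ for $n \ge N$, so $t_\infty \le \xi_{\hybrid, e^*}.\mathtt{TotalTime}$. Combining steps two and three yields $t_\infty = \xi_{\hybrid, e^*}.\mathtt{TotalTime}$, which is the claimed convergence of $e_n^*$ to $e^*$ in makespan, and hence the corollary.

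The main obstacle is not a computational one but an interpretive one: input signals live in an infinite-dimensional function space on which the paper fixes no metric, and $e^*$ need not be unique, so a literal ``$e_n^* \to e^*$'' in a function-space topology is not what is (or can be) proved — I would state up front that the intended and provable meaning is convergence of the optimal makespans. The one genuine care-point inside the argument is the padding move used in the upper bound: it needs the finite optimal run to contain at least one flow-set segment that can be duplicated with zero dwell time, which is exactly the operation licensed by the proof of Lemma~\ref{lemma:step}; I would cite that lemma rather than re-derive it, and observe that a run consisting solely of instantaneous jumps has makespan $0$ and is therefore already trivially optimal, so that degenerate case needs no padding.
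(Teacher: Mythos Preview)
Your proposal is correct and follows the same line as the paper, which in fact offers no separate proof: it simply records that $\mathcal{E}_{\hybrid}=\bigcup_n\mathcal{E}_{\hybrid^n}$ and that the optimal makespans are non-increasing in $n$, then states the corollary as an immediate consequence. Your three-step argument (monotonicity, lower bound from $\mathcal{E}_{\hybrid^n}\subseteq\mathcal{E}_{\hybrid}$, upper bound from finiteness of the optimal run plus the padding of Lemma~\ref{lemma:step}) is exactly the content behind that one-line assertion, and your remark that the limit must be read as convergence of optimal makespans rather than of input signals in some function-space topology is a clarification the paper leaves implicit.
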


% Another following corollary is that the minimum total time of the input signal in $\mathcal{E}_{\hybrid^n}$ is monotonically non-increasing towards $e^*$ in Definition~\ref{def:solution} as the action number $n$ increases.

\section{Mixed Integer Linear Encoding}
\label{section:milp}
In this section, we describe how to encode a finite-step linear hybrid planning problem as a Mixed Integer Linear Program, in which numbers of variables or constraints at each step increase linearly with the product of the operator number and the number of disjuncts involved in each condition (Section~\ref{sec:milp:complexity}). We first introduce a method to encode formulas with syntax as in \eqref{eq:formula} (Section~\ref{section:milp:disjunctive}), and move on to the detailed encoding procedure for the finite-step linear hybrid problem (Section~\ref{sec:milp:constraints}). Additional constraints about conflicting operators for speeding up MILP solving are discussed in (Section~\ref{sec:milp:additional}).
% the continuous and binary variables used to represent the input signal, internal states, and actions at each step (Section~\ref{section:milp:var}), then discuss how to formulate constraints on those variables to enforce constraints on the initial state, goal states, and the conditions and effects of actions (Section~\ref{section:milp:disjunctive}). We also introduce the objective function that represents the total time of solutions and present .

\subsection{Encoding Linear Constraint Formulas}\label{section:milp:disjunctive}
Firstly, we introduce the methods to encode constraints with syntax \eqref{eq:formula} as MILP constraints. We start by encoding a canonicalized form and move on to the general case. 

\paragraph{Encoding CNF Linear Constraint Formula} Note that a condition expressed using \eqref{eq:formula} can be always transformed into a conjunctive normal form (CNF) of linear constraints:
\begin{equation}
    \texttt{cond}(V) \equiv \land_r^m \lor_s^{m_r} (\texttt{cond}_{rs}(V)) \equiv \land_r^m \lor_s^{m_r} (G_{rs} V \geq H_{rs}),    
\end{equation}
where $G_{rs} \in \reals^{|V|}$ and $H_{rs} \in \reals$, $m$ is the number of conjuncts in $\texttt{cond}$, and $m_r$ is the number of disjuncts in the $r^{\text{th}}$ conjunct. For convenience, we also replace $G_{rs}V > H_{rs}$ with $G_{rs}V \geq H_{rs}$ without invalidating the solutions. As disjunctions are present in $\texttt{cond}$, which result in nonconvex sets in general, we use the Big-M method to handle such disjunctive logic in  $\lor_s^{m_r} (G_{rs} V \geq H_{rs})$. We define a $m_r$-vector of intermediate Boolean variables $\alpha_r$ with domain $\{0,1\}$. While $G_{rs} V \geq H_{rs}$ should hold if $\alpha_{rs} = 1$, we have $\alpha_{rs} = 1$ for at least one $\alpha_{rs}$. Then, the $r^\text{th}$ disjunction $\lor_s^{m_r} (G_{rs} V \geq H_{rs})$ is represented as a set of linear constraints:
\begin{equation}
  \left( \land_s^{m_r} \left(  \alpha_{rs} = 1  \implies G_{rs} V \geq H_{rs} \right) \right) \land  \left( \sum_s^{m_r} \alpha_{rs} \geq 1 \right)   
\end{equation}

Let $M$ be a very large positive number, then each implication is encoded as linear inequalities over both $V$ and indicator variables:
\begin{equation}
G_{rs}V + M(1-\alpha_{rs}) \geq H_{rs}
\end{equation}

As $\texttt{cond}(V) = \true$ needs all the conjuncts to hold, we end up with the following constraint:
\begin{equation}\label{eq:cnf}
    \land_r^m \land_s^{m_r}  \left( \left( G_{rs}V + M \left( 1-\alpha_{rs} \right) \geq H_{rs} \right) \land  \left( \sum_s^{m_r} \alpha_{rs} \geq 1 \right) \right).
\end{equation}

\paragraph{Encoding General Linear Constraint Formula} 
While some regions are easier to encode by using CNFs, the CNFs of some others take more space. For example, in Figure~\ref{fig:segment}, the blue region is a good candidate to be encoded as CNF $(\land_0^3 \phi_{0i}) \land (\lor_0^3 \phi_{1i}) \land (\lor_0^3 \phi_{3i})$, where $\phi_{ij}$ is a linear inequality and its direction is given in the figure. However, the green region is more intuitive to  be encoded as $ (\land_0^3 \phi_{2i}) \lor (\land_0^3 \phi_{4i})$, which is not CNF.

\begin{figure}[ht]
\centering
\includegraphics[width=0.9\columnwidth]{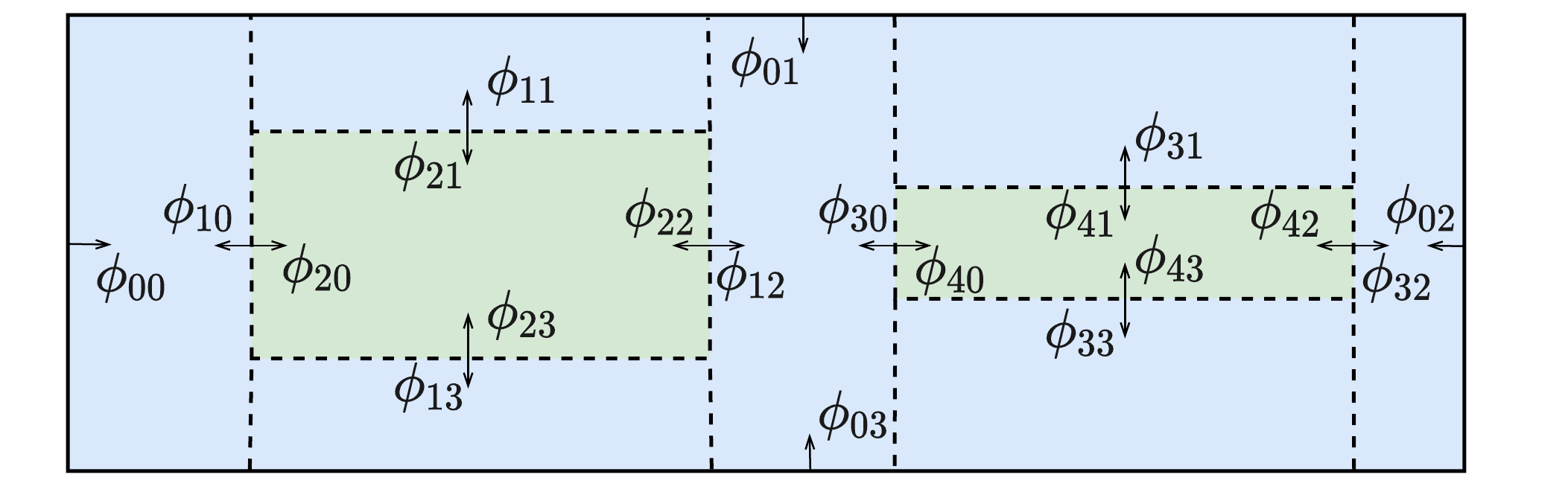}
\caption{\small Examples of two regions to encode as linear constraint formulas.}
\label{fig:segment}
\end{figure} 

We can use similar methods to encode the general case in \eqref{eq:formula} in a recursive fashion. Given a set of formulas $\{\phi_0, \phi_1,..\phi_m\}$, we assume the MILP constraint for $\phi_r$ is already encoded as $\land_s^{m_r} (\texttt{LHS}_{rs} \geq \texttt{RHS}_{rs})$, which is a set of linear constraints over $V$ and some indicator variables. We show both the conjunction and disjunction encodings of these formulas. To obtain the MILP constraint of conjunction $\land_r^m \phi_r$, we have:
\begin{equation}\label{eq:and}
    \land_r^m \land_s^{m_r} (\texttt{LHS}_{rs} \geq \texttt{RHS}_{rs}).
\end{equation}

To encode disjunction $\lor_r^m \phi_r$ , we introduce an $m$-vector of Boolean variables $\alpha$ and the disjunction is captured by: 
\begin{equation}\label{eq:or}
     \left( \land_r^m \land_s^{m_r}  \left( \texttt{LHS}_{rs} + M \left( 1-\alpha_{r} \right) \geq \texttt{RHS}_{rs} \right)  \right)   \land  \left( \sum_r \alpha_{r}^m \geq 1 \right) .
\end{equation}
For each linear constraint, if there exist some indicator variable that is not equal to $1$, the constraint trivially holds. This set of constraints can be further canonicalized into $\land_s^{m_r} (\texttt{LHS}_{rs} \geq \texttt{RHS}_{rs})$ over $V$ and indicator variables for other logic operations. 

% An alternative to CNF is the disjunctive normal form (DNF) $\texttt{cond} \equiv \lor_r^m \land_s^{m_r} (\texttt{cond}_{rs}) \equiv \lor_r^m \land_s^{m_r} (G_{rs} V \geq H_{rs})$, where $G_{rs} \in \reals^{|V|}$ and $H_{rs} \in \reals$, $m$ is the number of disjuncts in $\texttt{cond}$, and $m_r$ is the number of conjuncts in the $r^{\text{th}}$ disjunct. Similar to the constraint of CNF, its Big-M encoding is as follows:
% \begin{equation}\label{eq:dnf}
%     \land_r^m  \left(  \left( \land_s^{m_r} G_{rs}V + M \left( 1-\alpha_{r} \right)  \geq H_{rs} \right)  \land  \left( \sum_r^{m} \alpha_{r} \geq 1 \right)  \right) ,
% \end{equation}
% where $G_{rs} V \geq H_{rs}$ should hold for every $s \in \{0,1,..,m_r\} $ if Boolean variable $\alpha_{r} = 1$. 

\subsection{Encoding Finite-step Hybrid Planning}
Now we are ready to encode the entire planning problem as defined in Definition~\ref{def:solution} for linear hybrid automata with $n$-step runs.
\label{sec:milp:constraints}
\subsubsection{Variables}\label{section:milp:var}
% Given a linear hybrid automaton $\hybrid = \langle V = (Q \cup E), \init, \goal, J, F \rangle $ and a fixed number of steps $n$, the planning problem is finding an input signal such that its run with a sequence of interleaving $n+1$ states and $n$ actions can go from the initial state to some goal state. In this section, we first introduce the continuous and integer variables to represent the input signal and these $n+1$ states. Then, we move on to the variables to represent these $n$ actions.

To represent the internal states $\{q_0, q_1, .., q_n\}$, we define a set of variables $\{Q_0, Q_1,.., Q_n\}$, and $Q_i$ corresponds to the internal state $Q_i$ right after $a_i$ occurs and right before $a_{i+1}$ occurs. Their domains are copied from $Q$. To represent the input signal $e$, we also have $E_i \in \{E_0,E_1,..,E_{n-1}\}$, corresponding to the values of $E$ when $a_i$ occurs, and their domains are copied from $E$.  

To represent the actions that happen at each step, we define a set of binary activation variables $\{P_0, P_1,..,P_{n-1}\}$. $P_i$ is the union of $P^J_i$ and $P^F_i = P^{F_0}_i \cup P^{F_1}_i \cup,..,P^{F_K}_i$, which are the activation variables at step $i$ for jumps $J$ and flows $\{F_1, F_2,..,F_K\}$, respectively. Each $p_i^o \in P_i$ corresponds to an operator $o$ (i.e., a jump or a flow) at step $i$. If $p_i^o = 1$, operator $o$ is activated at step $i$; otherwise, $o$ is inactivated.  To fully determine the effects of flows, we need to specify the cumulative effects of the input variables and the elapsed time during these flows. Thus, we define $d_i$ with domain $[0,\infty)$ to represent the elapsed time during step $i$; and real variable $\Delta$ denotes $\int_{0}^{d_i} E_i dt$, the cumulative effects of $E_i$ during step $i$.

We denote the set of all these MILP variables as $\mathcal{V}$. Let $\Pi: \mathcal{V} \rightarrow \reals$ be a MILP solution that maps a MILP variable $v \in \mathcal{V}$ to a value $\Pi(v) \in \val(v)$. Then, given a MILP solution $\Pi$, we can get the values of the input command $e$ as well as a valid run $\xi_{\hybrid,e} = q_0 \xrightarrow{a_0, \delta_0} q_1 \cdots,q_{n-1} \xrightarrow{a_{n-1}, \delta_{n-1}} q_n$ as given in Definition~\ref{def:run}. 
We extract $e$ over duration $[0, \sum_{i=0}^{(n-1)} \Pi(d_i)]$ as follows:
\begin{equation}\label{eq:extract_e}
    \begin{aligned}
    e(t)=
    \left\{
    \begin{array}{l}
    
    \Pi(E_i), \quad \text{if } t =  (\sum_{j=1}^i \Pi(d_j)) \\
    
    \Pi(\Delta_i)/\Pi(d_i), \quad \text{if } (\sum_{j=0}^{(i-1)} \Pi(d_j)) < t < (\sum_{j=0}^{i} \Pi(d_j))
    
    \end{array}
    \right.
    \end{aligned}
\end{equation}

We extract the run $\xi_{\hybrid,e}$ of $e$ from $\Pi$ as follows:
\begin{equation}\label{eq:extract_run}
    \begin{aligned}
    q_i = \Pi(Q_i), & \quad \text{for } i \in \{0,1,..,n\}, \\
    \delta_i = \Pi(d_i),& \quad \text{for } i \in \{0,1,..,n-1\}, \\
    a_i = a \text{ if } p \in P_i \text{ and } \Pi(p) = 1, & \quad \text{for } i \in \{0,1,..,n-1\}.
    \end{aligned}
\end{equation}

% \subsection{Constraints and Objective Function}

\subsubsection{Objective Function}

As specified in Definition~\ref{def:run}, we aim at finding a run with minimum $\texttt{TotalTime}$, and thus the objective function is $\sum_{i=0}^{n-1} \delta_i$.

\subsubsection{Constraints}
Next, we introduce the constraints over these variables, which force only one jump or one flow set to be chosen at every time step with their conditions being satisfied and effects being imposed, such that the goal states can be reached from the initial state through these actions. Figure~\ref{fig:conditions} summarizes all constraints C1-C10 to encode the planning problem. We have the following theorem that justify the pair of input command and run given by \eqref{eq:extract_e}-\eqref{eq:extract_run} is valid:

\begin{theorem} \label{theorem:justify}
Given a $n$-step hybrid automata $\hybrid$ and a MILP solution $\Pi$ over the variables $\mathcal{V}$ as defined in Section~\ref{section:milp:var}, the input command $e$ and run $\xi_{\hybrid,e}$ that are extracted from $\Pi$ by  \eqref{eq:extract_e}-\eqref{eq:extract_run} are an optimal solution of $\hybrid$ if $\mathcal{V}$ satisfies constraint C1-C10 in Figure~\ref{fig:conditions} and $\sum_{i=0}^{n-1} \delta_i$ is minimized.
\end{theorem}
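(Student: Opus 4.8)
The plan is to prove the theorem in two directions: \emph{soundness} (the extracted pair $(e,\xi_{\hybrid,e})$ is a genuine run of $\hybrid$ in the sense of Definition~\ref{def:run}) and \emph{optimality} (among all such runs with $n$ actions, it minimizes \texttt{TotalTime}, and by the Corollary to Lemma~\ref{lemma:step} this is the optimal solution of $\hybrid$ once $n$ is taken large enough — more precisely, the theorem as stated asserts optimality relative to $\hybrid$, which we read as optimality within $\mathcal{E}_{\hybrid^n}$ together with the limiting statement already established). First I would unpack what each family of constraints C1--C10 in Figure~\ref{fig:conditions} is responsible for: one constraint fixes $Q_0 = \init$ and one forces $Q_n \models \goal$ (matching item~1 of Definition~\ref{def:run}); the activation variables $P_i$ must be constrained so that exactly one jump is active, or exactly one flow from each $F_k$ (and no jump), giving a well-defined action $a_i \in J \cup 2^F$ (this is the ``exactly one flow $f_k$ from each $F_k$'' requirement of Definition~\ref{def:HA} and the action structure of Definition~\ref{def:run}); the guard constraints must enforce $(Q_i, E_i) \in \texttt{cond}(a_i)$ via the Big-M encoding of Section~\ref{section:milp:disjunctive} conditioned on $p_i^o = 1$; the transition constraints must enforce $Q_{i+1} = \texttt{eff}(Q_i, E_i)$ when $a_i$ is a jump with $d_i = 0$, and $X_{i+1} = X_i + A\Delta_i + B d_i$ with $\ell_{i+1} = \ell_i$ when $a_i$ is a flow set; and the invariant constraints must enforce that the condition holds throughout $[t_i, t_{i+1})$, which by linearity of the flow and convexity arguments reduces to checking it at the endpoints.

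The core of the soundness argument is then a straightforward induction on $i$: assuming $(q_0,\dots,q_i)$ with $q_j = \Pi(Q_j)$ is a valid prefix, the constraints C1--C10 satisfied by $\Pi$ force $q_{i+1} = \Pi(Q_{i+1})$ to extend it correctly, with $\delta_i = \Pi(d_i)$ and $a_i$ as in \eqref{eq:extract_run}. The one subtlety deserving care is the continuous-flow case: I would argue that the piecewise-constant $e$ defined in \eqref{eq:extract_e} on the open interval $(\sum_{j<i}\Pi(d_j), \sum_{j\le i}\Pi(d_j))$ equal to $\Pi(\Delta_i)/\Pi(d_i)$ indeed integrates to $\Pi(\Delta_i)$ over that interval, so that $x_{i+1} = x_i + A\int_{t_i}^{t_{i+1}} e(\tau)d\tau + B\delta_i$ holds with $\int_{t_i}^{t_{i+1}} e(\tau)d\tau = \Pi(\Delta_i)$, matching item~2(b); and that because the flow dynamics are linear (affine) in time under a constant input, $q(t)$ traces a line segment between $q_i$ and $q_{i+1}$, so the Big-M condition constraint evaluated at the two endpoints suffices to guarantee $(q(t), e(t)) \in \texttt{cond}(a_i)$ for all $t \in [t_i, t_{i+1})$ provided $\texttt{cond}(a_i)$ is a convex region; for nonconvex conditions one has to check that the MILP encoding picks a single disjunct (a single setting of the indicator vector $\alpha$) that is valid at both endpoints, hence along the whole segment. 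The degenerate case $d_i = 0$ (a zero-length flow step, as introduced in the proof of Lemma~\ref{lemma:step}) needs the convention $\Delta_i = 0$, which should be one of the constraints.

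For optimality, the argument is a converse encoding: given \emph{any} run $\xi_{\hybrid,e'}$ of $\hybrid^n$ with $n$ actions and input $e'$, I would exhibit a MILP assignment $\Pi'$ satisfying C1--C10 whose objective $\sum_i \Pi'(d_i)$ equals $\xi_{\hybrid,e'}.\mathtt{TotalTime}$ — set $\Pi'(Q_i) = q_i$, $\Pi'(d_i) = \delta_i$, $\Pi'(\Delta_i) = \int_{t_i}^{t_{i+1}} e'(\tau)d\tau$, $\Pi'(E_i) = e'(t_i)$, activation variables from the $a_i$, and indicator variables chosen to witness the disjuncts of the conditions that $\xi_{\hybrid,e'}$ satisfies. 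Combined with soundness, this gives a bijection (up to objective value) between feasible MILP solutions and runs of $\hybrid^n$, so the minimizer of the MILP objective yields $e_n^*$ of \eqref{eq:enstar}; invoking the Corollary then gives the stated optimality with respect to $\hybrid$ in the limit. The main obstacle I anticipate is the invariant/between-times condition for flows: getting the reduction ``condition holds on the whole interval $\iff$ condition holds at both endpoints (under a fixed disjunct)'' completely correct requires being careful that the encoding in Figure~\ref{fig:conditions} really does tie the indicator variables across the two endpoint checks to the \emph{same} disjunct, and that affineness of $q(\cdot)$ on $[t_i,t_{i+1})$ is genuinely used; a secondary obstacle is bookkeeping the exactly-one-action-per-step constraint so that flow sets are well-formed (one flow per $F_k$) and mutually exclusive with jumps. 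Everything else is routine induction and case analysis.
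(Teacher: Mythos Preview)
Your proposal is correct and follows essentially the same approach as the paper: the paper's proof is a constraint-by-constraint explanation that C1--C10 are ``the exact translation of Definition~\ref{def:run}'' (C1 for initial/goal, C2 for the exactly-one-action structure, C3--C5 for jumps, C6--C10 for flows), with the same shared-indicator-variable convexity argument you identify as the key subtlety for flow invariants. Your write-up is more explicitly structured (induction on $i$, and an explicit converse encoding for completeness) than the paper's, which simply asserts soundness and completeness after the translation; but the substance is the same.
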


Next, we prove Theorem~\ref{theorem:justify} by explaining C1-C10 in detail. As these constraints are the exact translation of Definition~\ref{def:run}, our solution approach is sound and complete and thus optimal.

\begin{figure}[t]
\centering
  \lstinputlisting[basicstyle=\small]{constraint}
  \caption{\small Constraints in the MILP encoding.}
  \label{fig:conditions}
\end{figure}

\paragraph{Initial and Goal States}
First, constraint C1 ensures that runs start from $\init$, and its final state $Q_n$ satisfies $\goal$ such that Definition~\ref{def:run}(1) is respected. We use \eqref{eq:and}-\eqref{eq:or} to encode constraint $\goal(Q_N) = \true$.

\paragraph{Operator Activation}
Constraint C2 can avoid the ambiguity of having multiple jumps or multiple flows for the same internal continuous variables at each step. Recall that $p_i^o = 1$ means the corresponding operator $o$ is activated at step $i$. Constraint C2 forces either of the following conditions to hold: (1) exactly one jump is active, and all the flows are inactivated (Definition~\ref{def:run}(2a)); (2) all jumps are inactivated, and there is exactly one flow for each continuous variable set is activated (Definition~\ref{def:run}(2b)).

\paragraph{Jump Constraint}
When a jump is active, its conditions should be satisfied, and their effects should be imposed (Definition~\ref{def:run}(2a)). For each jump $j \in J$ with condition $\textit{cond}_j$, when jump $j$ is activate at step $i$, which is $p_{i}^j = 1$, condition $\textit{cond}_j$ should hold, which is captured by C3. Constraint C4 enforces the effect, which is linear constraints $Q_i = \texttt{eff}_j(V_{i-1})$, to happen right after jumps. Note that this constraint also forces the unaffected variables to remain the same after the jumps. In addition, C5 ensures that the elapsed time during jumps is zero, which is activated when some jump is chosen.

\paragraph{Flow Constraint}
While the condition of jumps should only hold right before it happens, the condition of flows should always hold until the next action (Definition~\ref{def:run}(2b)).
% We handle flow conditions differently for state and input variables. 
Let $\textit{cond}_f$ be the condition of a flow $f \in F$ and denote the constraints of $\textit{cond}_f$ over $Q$ and $E$ as $\textit{cond}_f^Q$ and $\textit{cond}_f^E$, respectively. Given our solution specification, while the constraint over $Q$ should hold through the execution of $f$, including the start and end, the constraint over $E$ should also thoroughly hold except at the end of this flow, where the change of $E$ may trigger other jumps or flows. 

Since the considered dynamics are linear and all the conditions are sets of disjunctive linear constraints, satisfying $\textit{cond}_f ^ Q$ at the start and the end of flow $f$ with respect to the same disjunct in each disjunctive linear constraint implies $\textit{cond}_f ^ Q$ holds during $f$. Constraint C6 captures $\textit{cond}_f ^ Q$ at the start and end of flow $j$ being activated. This constraint is sufficient to guarantee the linear trajectory from $Q_i$ ti $Q_{i+1}$ always satisfies $\texttt{cond}^Q_j$, and the reason is as follows: they share the same indicator variables and always satisfy the same disjunct in each disjunction, and thus they are in the same convex region; as the line between two points in a convex region always stay in this region, we know that the trajectory from $Q_i$ to $Q_{i+1}$ satisfy $\texttt{cond}_f^Q$. As a similar encoding for motion planning with polytope obstacles can be seen in \cite{FanMMV:CAV2018}, our encoding method extends it to handle more general linear constraint formula.

As condition $\textit{cond}_f ^ E$ should hold through flow $f$ except at the end, we add constraints over $E$ when $f$ starts and the constraints over $\Delta$, which is the cumulative effects of $E$ during $f$ happening. While the former constraint is captured in C7, the latter is in C8. For a linear constraint $G^E_{f, rs}E_i \geq H^E_{f, rs}$, we can obtain the equivalent linear constraint over $\Delta_i$ and $d_i$ by integrating it over time on both sides and substituting in $\Delta = \int_{0}^{d} E dt$:
\begin{equation}
    G^E_{f, rs} \Delta_i \leq H^E_{f, rs} d_i.
\end{equation}
By doing this for each  $G^E_{f, rs}E_i \geq H^E_{f, rs}$ in $\texttt{cond}^E_{f, rs}$, we obtain the condition $\texttt{cond}^\Delta_{f, rs}$ over $(\Delta, d)$.

Then, we determine the evolution of state variables during the flows. Recall that the state variables $Q$ consist of continuous state variables $X$ and discrete state variables $L$.  Let the differential equation of a flow $f$ be $\dot X_k = A_fE + B_f$, and then C9 enforces continuous dynamics by adding the effects of $\Delta$ and $d$ to $X_k$. Constraint C10 makes sure that flows do not change discrete variables.
% As flows do not change discrete variables, we also add the constraint C9, which is activated when some flow set is chosen.

\subsection{Complexity Analysis}
\label{sec:milp:complexity}

Now, we discuss the complexity of our MILP encoding. Let $n$ be the number of total steps, $K$ be the total number of disjoint continuous variable sets $\{X_1,X_2,..,X_L\}$, $Q$ be the internal state variables, $E$ be the input variables, $J$ be the jumps, $F$ be the flows, and $m$ and $m'$ be the maximum number of linear inequalities and disjuncts in each condition or effect, respectively. As shown in Section~\ref{section:milp:var}, we know the total number of variables in a MILP is the sum of the following variables: the variables for internal states, input signals, elapsed times, cumulative effects, which are $n(|Q|+2|E|+1)$ MILP variables in total; the activation variables for flows and jumps, which is $n(|J|+|F|)$; the additional Boolean variables for indicating activated disjuncts in conditions, which is $2m'+nm'(|J|+2|F|)$ as shown in Table~\ref{tab:ind_num}. Thus, we know the number of all the variables is in $\mathcal{O}(n(|V|+m'(|J|+|F|))$, where $\mathcal{O}$ is the asymptotic notation.

\begin{table}[h]\small
\caption{\small Numbers of indicator variables in C1-C10.}
\label{tab:ind_num}

\begin{tabular}{|c|c|c|c|c|c|c|}
\hline
 C1   & C2 & C3 & C4 or C5 & C6 & C7 or C8 & C9 or C10 \\ \hline
$2m'$ & $0$ & $nm'|J|$ & $0$ & $nm'|F|$ & $nm'|F|$ & $0$ \\ \hline
\end{tabular}

\end{table}

\begin{table}[h]\small
\caption{\small Numbers of linear constraints in C1-C10.}
\label{tab:cons_num}

\begin{tabular}{|c|c|c|c|c|c|c|}
\hline
 C1 & C2 & C3 or C4 & C5 & C6 & C7 or C8 & C9 or C10 \\ \hline

 $2m$ & $nK$ & $nm|J|$ & $n|J|$ & $2nm|F|$ & $nm|F|$ & $n|F|$ \\ \hline
\end{tabular}

\end{table}
We show the total numbers of constraints in C1-10 in Table~\ref{tab:cons_num}. The total number of all these constraints is $2m + n(K + (2m+1)|J| + (3m+1)|F|)$. As $K \leq |F|$, we know the number of total constraints is in $\mathcal{O}(nm(|J|+|F|))$. At each time step, the constraint number increase linearly with the product of the number of operators $|J|+|F|$ and the number of disjuncts in a condition $m$.

\subsection{Additional Techniques for Speeding up}
\label{sec:milp:additional}
To accelerate MILP solving, we add additional constraints to encode conflicting operators that cannot happen together or in sequence. While constraints C3, C4, C6, and C7 already prevent these conflicting operators from happening, additional constraints can help a MILP optimizer to effectively prune state space and thus reduce total runtimes.

One type of additional constraints is about the flows with mutually exclusive conditions. Let $f, f' \in F$ be two flows whose differential equations scope on different continuous variable sets $X_k, X_{k'}$, and $\texttt{cond}_f$ and $\texttt{cond}_{f'}$ be there conditions. If $\texttt{cond}_f \land \texttt{cond}_{f'}$ is always $\false$, which means their specified states are totally disjoint, we add constraint $p^f_i + p^{f'}_i  \leq 1$ for every $i \in \{0,1,..,n-1\}$, which ensures at most one of these flows can be activated at every step. The total number of the added constraints is $nm_f$, where $m_f$ is the number of conflicting flow pairs. Note that these constraints are redundant, which are already encoded by C6, but could help a MILP optimizer to easily identify conflicting operators without further checking the complex constraints in C6.

Another type of additional constraints is about subsequent conflicting operators. Let $o, o' \in J \cup F$ be two operators, and $\texttt{post}_o^Q$ and $\texttt{pre}_{o'}^Q$ be the possible internal states after taking $o$, and the possible internal states before $o'$, respectively. We have $\texttt{pre}_{o'}^Q = \texttt{cond}_{o'}^Q$ regardless of $o'$ is a jump or a flow, where $\texttt{cond}_{o'}^Q$ is the condition of $o'$ over internal states $Q$. On the other hand, $\texttt{post}_{o}$ can be different given different operator types: if $o$ is a flow, $\texttt{post}_{o}$ is still $\texttt{cond}_{o}^Q$ since flow $o$ needs this condition to hold during its happening; if $o$ is a jump, we set $\texttt{post}_{o} = \{ \texttt{eff}_o(q,u) \ | \ \texttt{cond}^Q_o(q) = \true \text{ and } (q,u) \in V \}$. If $\texttt{post}_o \land \texttt{pre}_{o'}$ is always $\false$, we know they cannot happen in sequence. Thus, we add constraint $p_i^o + p_{i+1}^{o'} \leq 1$ for every $i \in \{0,1,..,n-2\}$. The total number of the added constraints is $(n-1)m_o$, where $m_o$ is the number of conflicting operator pairs that cannot happen in sequence.

\section{Temporally Concurrent Goals}\label{section:multiple}
In this section, we extend our method to handle a set of temporally concurrent goals by compiling them into our hybrid automata with a certain final goal. There are various types of specifications to represent desired system behaviors over continuous time in both planning and control, such as STL (Signal Temporal Logic) \cite{maler2004monitoring}, and Qualitative State Plan (QSP) \cite{hoffmann2001ff}. While the former formalism has a more expressive syntax by using formal logic, QSP is a well-known specification used in planning and is more suitable to the applications we consider in this paper, which specifies a set of tasks to complete as well as the temporal bounds between their starts and ends. We introduce our method to deal with QSP in this section and present related experiments in Section~\ref{section:results:delivery}. We view exploring the methods and applications related to STL as our future work.

% A $QSP$ is a tuple $qsp = \langle EV, EP \rangle$: $EV$ is a set of events with non-negative real domains and $e_0 \in EV$ is the initial event; $EP$ is a set of episodes. Each episode $ep =\langle e^\vdash, e^\dashv, lb, ub, \texttt{cond}  \rangle$ is associated with start and end events $e^\vdash, e^\dashv \in E$, a duration bound $[lb, ub]$, and a condition $\texttt{cond}$. Condition $\texttt{cond}$ must be $\true$ when $e^\vdash < t < e^\dashv$, and the temporal bound $[lb, ub]$ so that $lb \leq e^\dashv - e^\vdash \leq ub$. For each $e \in EV$, we denote $\{ep \in EP \ | \ e \text{ is the start event of } ep \}$ as $\texttt{starting}(e)$ and $\{ep \in EP \ | \ e \text{ is the end event of } ep \}$ as $\texttt{ending}(e)$.  

\begin{figure}[ht!]
\centering
\includegraphics[width=0.95\columnwidth]{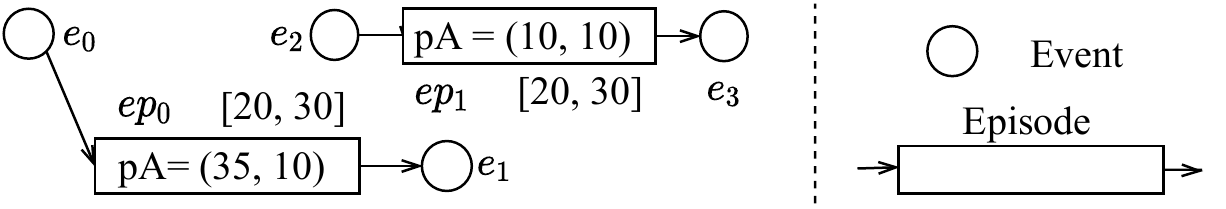}
\caption{\small QSP example of four events and two episodes. Episode $ep_0$ constrain the astronaut to stay at the initial state between $20$ and $30$ minutes right after the mission begins; Episode $ep_1$ constrain the astronaut to stay at the charge station between $20$ and $30$ minutes sometime during the mission.}
\label{fig:qsp}
\end{figure} 

A $QSP$ is a tuple $qsp = \langle EV, EP \rangle$: $EV$ is a set of events, and $e_0 \in EV$ is the initial event that represents the mission begins; $EP$ is a set of episodes. Each episode $ep =\langle e^\vdash e^\dashv, lb, ub, \texttt{cond}  \rangle$ is associated with start and end events $e^\vdash, e^\dashv \in EV$, a duration bound $[lb, ub]$, and a condition $\texttt{cond}$. For each $e \in EV$, we denote $\{ep \in EP \ | \ e = ep.e^\vdash \}$ as $\texttt{starting}(e)$ and $\{ep \in EP \ | \ e = ep.e^\dashv \}$ as $\texttt{ending}(e)$. An example of QSP is given in Figure~\ref{fig:qsp}.

A schedule $s: EV \rightarrow \reals^\geq$ to $qsp$ is a function that maps $e \in EV$ to a non-negative real value such that (1) $s(e_0) = 0$; and (2) $lb_i \leq s(e_i^\vdash) - s(e_i^\dashv) \leq ub_i$ for every $ep_i \in EP$. We say a trajectory $\xi$ satisfies $qsp$ if there exists a schedule $s$ such that for every $ep_i \in EP$, $\texttt{cond}_i(\xi(t)) =\true$  when $(s(e^\vdash_i) < t < s(e^\dashv_i))$.

Given a hybrid automaton $\hybrid = \langle V = (Q \cup E), \init, \goal, J, F \rangle $ and a QSP $qsp = \langle EV, EP \rangle$, we compile this QSP into the original automaton as described below, and the runs of the obtained new automaton $\hybrid'$ respect both $\hybrid$ and $qsp$. We denote this new automaton as $\hybrid' = \langle V' = (Q'\cup E), \init', \goal', J', F'\rangle$.

First, we make a clock variable $c_{ep}$ with domain $[-2, \infty)$ for each episode $ep \in EP$. While $c_{ep} = -1$ means $ep$ has not started, $c_{ep} = -2$ means $ep$ has been achieved. When $ep$ is happening, $c_{ep} \geq 0$. Thus, the continuous state variables of $\hybrid'$ is $Q' = Q \cup C$ and $C  = \{c_{ep} \in [-2, \infty) \ | \ ep \in EP \}$. Since all the episodes have not started in the beginning except the episodes started by initial event $e_0$, the new initial state is $\init' = \init \cup \{(c = -1) \ | \ c \in C/\texttt{starting}(e0)\} \} \cup \{c = 0 \ | \ ep \in \texttt{starting}(e_0)\}$. As all the episodes should be achieved eventually, the new goal is $\goal' = \goal \cup \{(c = -2) \ | \ c \in C \}$.

To describe that clock variables reset at events, we add a set of jumps $J_{EV}$, and $J' = J \cup J_{EV}$.  For each event $e \in EV$, there is a jump $j_e \in J_{EV}$ with the following condition: $\{(c_{ep} = -1) \ | \ ep \in \texttt{starting}(ep)\}$, which ensures event $e$ has not happened before, and  $\{ (lb(ep) \leq c_{ep} \leq ub(e)) \ | \ ep \in \texttt{ending}(e) \}$, which shows $e$ should end only when all the ended episodes has lasted for a proper duration with respect to their temporal bounds. The effects $\{(c_{ep} = 0) \ | \ ep \in \texttt{starting}(e)\}$ and $\{(c_{ep} = -2) \ | \ ep \in \texttt{ending}(e)\}$ capture the clock variable resets for started episodes and ended episodes, respectively.

To force the condition is imposed and its clock variable clicks when an episode is happening, we have a flow $f_{ep}^1$ for each clock variable $c \in C$. This flow has differential equation $\dot c_{ep} = 1$ and conditions $(c_{ep} \geq 0) \cup \texttt{cond}_{ep}$. We also have $f_{ep}^0$ with differential equation $\dot c_{ep} = 0$ and condition $(c_{ep} \leq -1)$ to represent that episode $ep$ is not happening. Thus, the new flows are $F' = F \cup F_{EP}$ and $F_{EP} =  \{ f_{ep}^0 \ | \ ep \in EP \} \cup \{ f_{ep}^1 \ | \ ep \in EP \}$.

% \begin{itemize}
%     \item $Q' = Q \cup C_{EV}$. $C  = \{c_{ep} \in [-2, \infty) \ | \ ep \in EP \}$.
%     \item $\init' = \init \cup \{(c = -1) \ | \ c \in C/\texttt{starting}(e0)\} \} \cup \{c = 0 \ | \ ep \in \texttt{starting}(e_0)\}$.
%     \item $\goal' = \{(c = -2) \ | \ c \in C \}$.
%     \item $J' = J \cup J_{EV}$ and $J_{EV} =  \{ j_e \ | \ e \in EV \}$. For each event $e \in EV$, there is a jump $j_e \in J_{EV}$ with the following condition $\{(c_{ep} = -1) \ | \ ep \in \texttt{starting}(ep)\}$ and  $\{ (lb(ep) \leq c_{ep} \leq ub(e)) \ | \ ep \in \texttt{ending}(e) \}$. The effects are $\{(c_{ep} = 0) \ | \ ep \in \texttt{starting}(e)\}$ and $\{(c_{ep} = -2) \ | \ ep \in \texttt{ending}(e)\}$
%     \item 
% \end{itemize}

\section{Experimental Results}\label{section:results}

To demonstrate the capabilities of our method, we ran our MILP encoding with Gurobi 9.0.1, which is highly optimized and leverages multiple processor cores, and benchmarked against Scotty \cite{fernandez2018scottyactivity} on the Mars transportation domains with different initial setups, the air refueling domains with different numbers of UAVs taking photos in different numbers of regions, and the truck-and-drone delivery domains with different numbers of trucks, drones, and packages. All experiments were run on a 3.40GHZ 8-Core Intel Core i7-6700 CPU with 36GB RAM with a runtime limit of $600$s. At the end of this section, we also discuss the sizes of these MILP encodings.

\begin{figure}[t]
    \centering
    \subfloat[The rover directly picks up and delivers the astronaut to the destination.]{\includegraphics[frame,trim={0.7cm 1.2cm 0.2cm 0.6cm},clip,width=0.45\columnwidth]{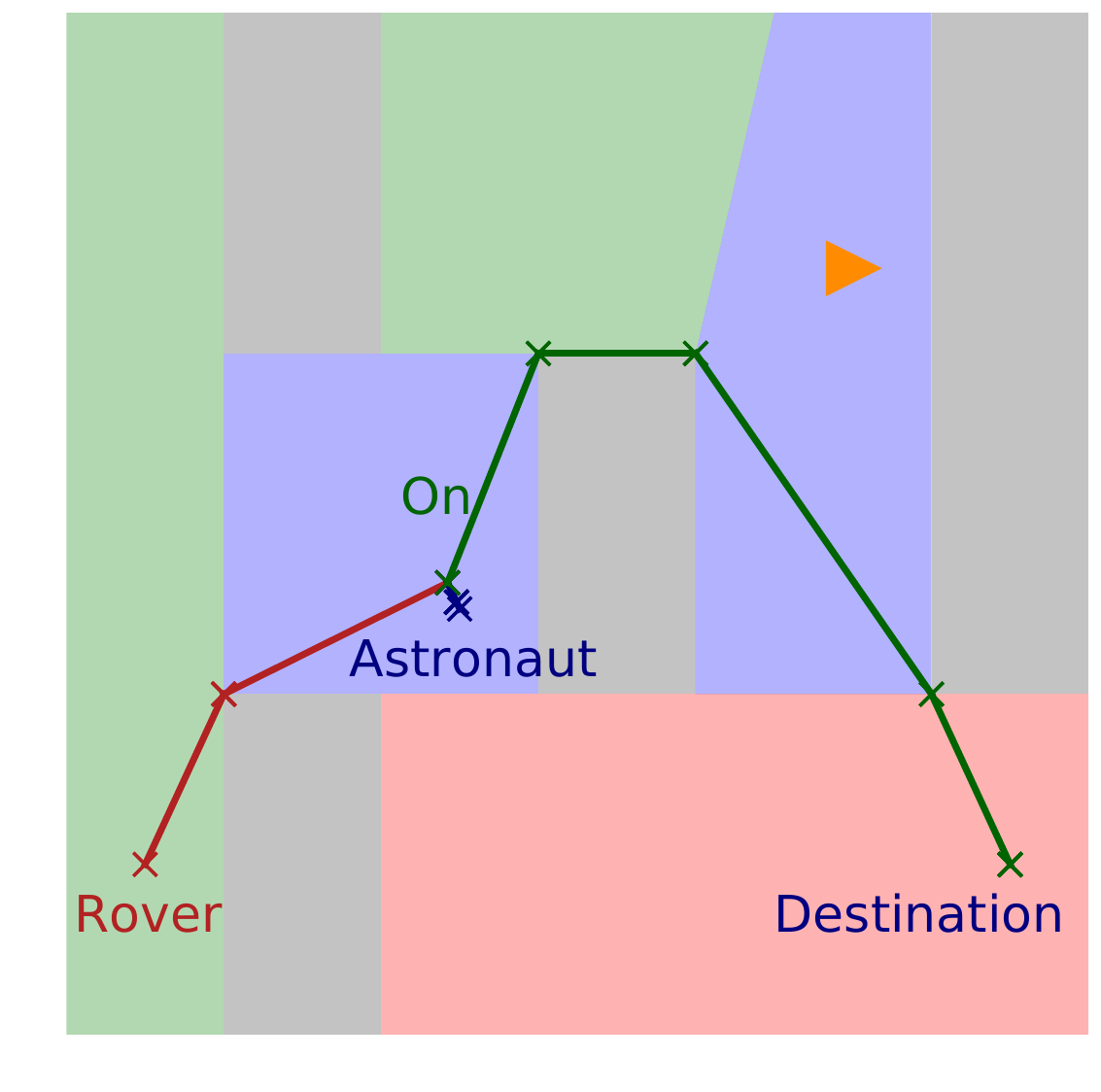}}\
    \,
    \subfloat[The rover does not have enough battery for the trip or going to the charge station, and the astronaut has to walk.]{\includegraphics[frame,trim={0.7cm 1.2cm 0.2cm 0.6cm},clip,width=0.45\columnwidth]{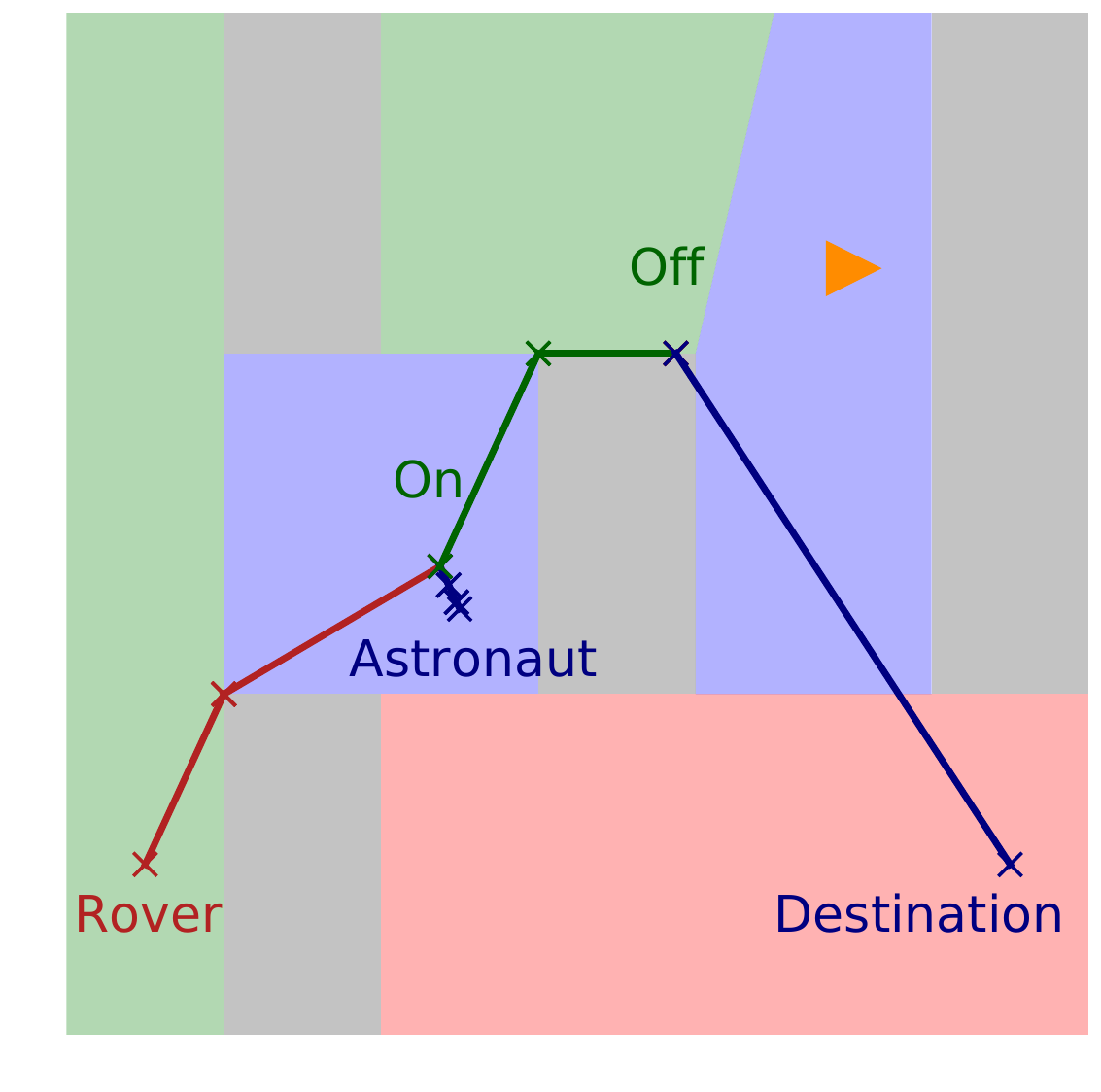}}
    \,
    \subfloat[The rover picks up and delivers the astronaut but  has to recharge during the trip.]{\includegraphics[frame,trim={0.7cm 1.2cm 0.2cm 0.6cm},clip,width=0.45\columnwidth]{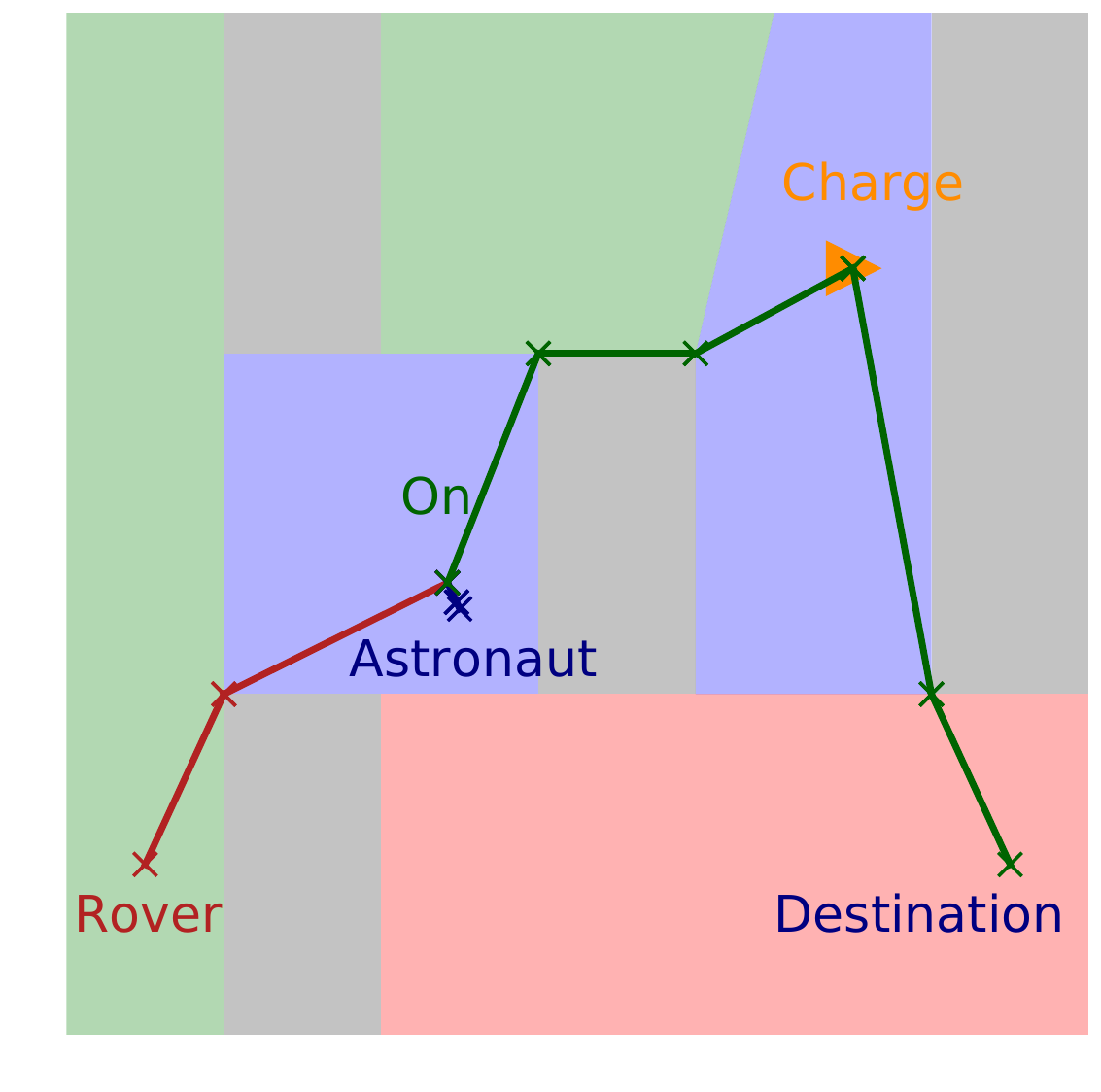}}
    \,
    \subfloat[The rover picks up and delivers the astronaut after recharging.]{\includegraphics[frame,trim={0.7cm 1.2cm 0.2cm 0.6cm},clip,width=0.45\columnwidth]{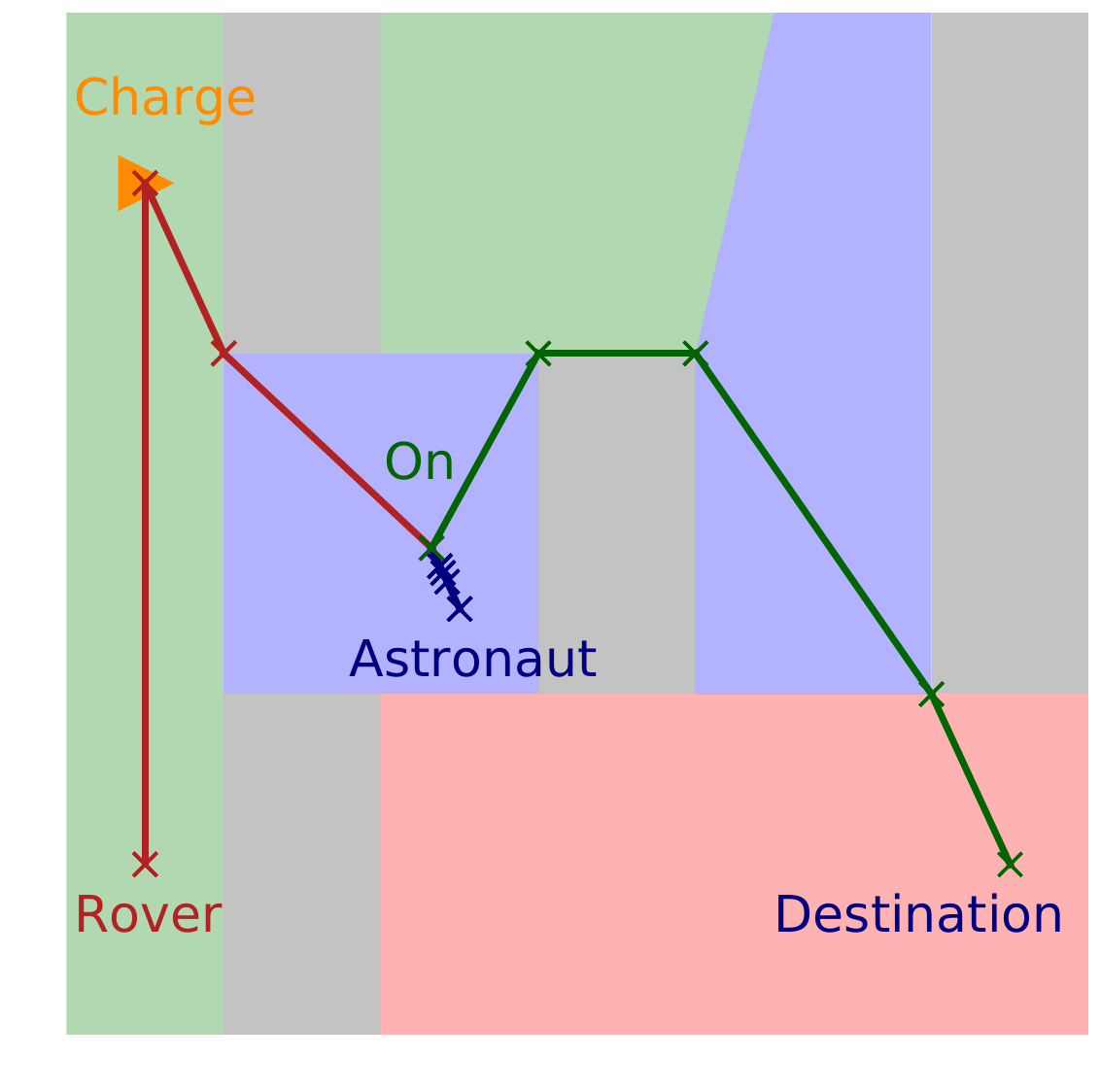}}
    \caption{\small Mars transportation examples with different initial battery levels and charge station locations: the charge station is marked as $\triangleright$; the forbidden areas, mountain, ground, and basin are in gray, red, green, and blue, respectively; the route of the rover is in red and starts from the bottom left; the route of an astronaut walking is in blue, and its goal destination is at the bottom right; the route of the astronaut taking the rover to the destination is in green.}
    \label{fig:mars_execution}

\end{figure}

\begin{table*}[t]
\caption{\small Experimental results of twelve domains. The three numbers after each delivery domain name are the numbers of trucks, drones, and packages, respectively; $t$: the total runtime in seconds; $g$: the makespan of the returned solution; $t_1$: the runtime to find the first solution; $g_1$: the makespan of the first solution; $t_*$: the runtime to first find the solution that is finally returned; $n$: the number of actions; $\# \mathcal{V}_C$, $\# \mathcal{V}_I$, \#C: the numbers of continuous variables, integer variables, and constraints in our MILP encoding; $\# \mathcal{V}_C'$, $\# \mathcal{V}_I'$, \#C': the numbers of continuous variables, integer variables, and constraints in the presolved MILP models. }

\label{tab:results}
%\small
\begin{tabular}{|c||c|c||c|c|c|c|c||c|c|c|c|c|c|c|}
\hline
\multirow{2}{*}{Domain} & \multicolumn{2}{c|}{Scotty} & \multicolumn{12}{c|}{MILP Encoding} \\ \cline{2-15} 
 & $t$ & $g$ & $t$ & $g$ & $t_1$ & $g_1$ & $t_*$ & $n$ & $\# \mathcal{V}_C$ & $\# \mathcal{V}_C'$ & $\# \mathcal{V}_I$ & $\# \mathcal{V}_I'$ & \#C & \#C' \\ \hline
Mars (a) & \textless{1} & 35 & \textless{1} & 4.6 & \textless{1} & 35 & \textless{1} & 6 & 65 & 54 & 66 & 34 & 1003 & 527 \\ \hline
Mars (b) & \textless{1} & 35 & \textless{1} & 25.2 & \textless{1} & 35 & \textless{1} & 6 & 65 & 54 & 66 & 34 & 1003 & 527 \\ \hline
Mars (c) & \textless{1} & 35 & 5.0 & 5.2 & \textless{1} & 35 & \textless{1} & 9 & 95 & 84 & 99 & 58 & 1501 & 884 \\ \hline
Mars (d) & \textless{1} & 35 & 3.9 & 6 & \textless{1} & 35 & \textless{1} & 9 & 95 & 84 & 99 & 58 & 1501 & 915  \\ \hline
Air (a) & 4 & 91 & \textless{1} & 43.7 & \textless{1} & 82.7 & \textless{1} & 7 & 75 & 58 & 80 & 22 & 897 & 423 \\ \hline
Air (b) & 17 & 163 & 2.4 & 55.9 & \textless{1} & 103.9 & 1.5 & 12 & 125 & 109 & 185 & 117 & 2443 & 1826 \\ \hline
Air (c) & \textgreater{600} & NA & \textgreater{600} & 84.3 & 1.7 & 177.8 & 70.9 & 24 & 245 & 232 & 610 & 478 & 11117 & 9274 \\ \hline
Air (d) & \textgreater{600} & NA & \textgreater{600} & 40.7 & 0.6 & 108.2 & 154.4 & 18 & 278 & 260 & 566 & 450 & 13692 & 11181 \\ \hline
Delivery (a) (1,1,1) & NA & NA & 8.9 & 36 & \textless{1} & 240 & 3.3 & 7 & 99 & 91 & 339 & 240 & 6449 & 3347 \\ \hline
Delivery (b) (1,2,2) & NA & NA & 13.0 & 12 & \textless{1} & 120 & 3.3 & 7 & 129 & 115 & 402 & 308 & 7610 & 4325 \\ \hline
Delivery (c) (2,4,4) & NA & NA & \textgreater{600} & 132 & \textless{1} & 780 & 219.6 & 11 & 291 & 267 & 1128 & 999 & 28538 & 20804 \\ \hline
Delivery (d) (2,4,4) & NA & NA & \textgreater{600} & 240 & 1.9 & 960 & 14.5 & 11 & 339 & 319 & 1172 & 1029 & 34130 & 25583 \\ \hline
\end{tabular}

\end{table*}

\subsection{Mars Transportation Domain}
The Mars transportation domains involve reasoning over obstacle avoidance and battery consumption under different terrains, such that the astronaut can reach the destination with the help of the rover in the shortest time. A map consists of a set of regions, and each region is a polygon associated with a terrain type. A region can be of the forbidden area, mountain, ground, and basin, which follows the terrains in Figure~\ref{fig:mars}. Driving a rover in different terrains has different velocity limits and energy consumption rates: driving in the mountains should be limited to 10km/h, and the battery consumption rate is 3unit per hour; the velocity limit and the consumption rate in the basin is 30km/h and 2unit/h, and those are 50km/h and 2unit/h for the ground. Walking in these three terrains is 2km/h. On the map, while we fix the initial locations and destinations for astronauts and rovers, we vary the locations of charge stations and the initial battery levels in the four different examples in Figure~\ref{fig:mars_execution}.

In Figure~\ref{fig:mars_execution}, the forbidden areas, mountain, ground, and basin are in gray, red, green, and blue, respectively. As we can see, while the rover starts from its initial location at the bottom left and traverses through mountains and basins to arrive at the destination, the astronaut walks towards the rover and joins the ride. It is interesting to notice that the rover chooses the upper route since traversing the lower mountain area costs more time and energy. While the battery is enough for the rover to complete the route in (a), it is insufficient in (b) and (c). In Figure~\ref{fig:mars_execution}(c), the rover carries the astronaut to the charge station and then continues the mission after getting enough battery. In  Figure~\ref{fig:mars_execution}(b), the rover battery is too low and even insufficient for the trip to the charge station. Thus, the astronaut gets off and walks from the closest location to the destination after draining the battery. In Figure~\ref{fig:mars_execution}(d), the rover also gets recharged, but it happens before picking up the astronaut due to different charge station locations.

Table~\ref{tab:results} shows that our method is able to find the optimal solution and prove its optimality for all these four domains. While Scotty can find a consistent solution $g = 35$ within one second $t<1$, our method can also find such a solution $g_1 = 35$ within one second $t_1<1$. In this solution, the astronaut directly moves to the destination without the help of the rover, which only uses one action but takes a very long time. While Scotty stops after finding this solution, our method keeps searching for better solutions and finds the optimal solution roughly within one second $t_* < 1$. These solutions are then proved to be optimal and returned as Gurobi exhausts the solution space. Thus, our method is able to quickly find a consistent solution and an optimal solution for the Mars transportation domains.

\begin{figure}[]
    \centering
    \subfloat[The UAV takes photos for three regions and does not need refueling.]{\includegraphics[frame,trim={0cm 0cm 0cm 0cm},clip,width=0.45\columnwidth]{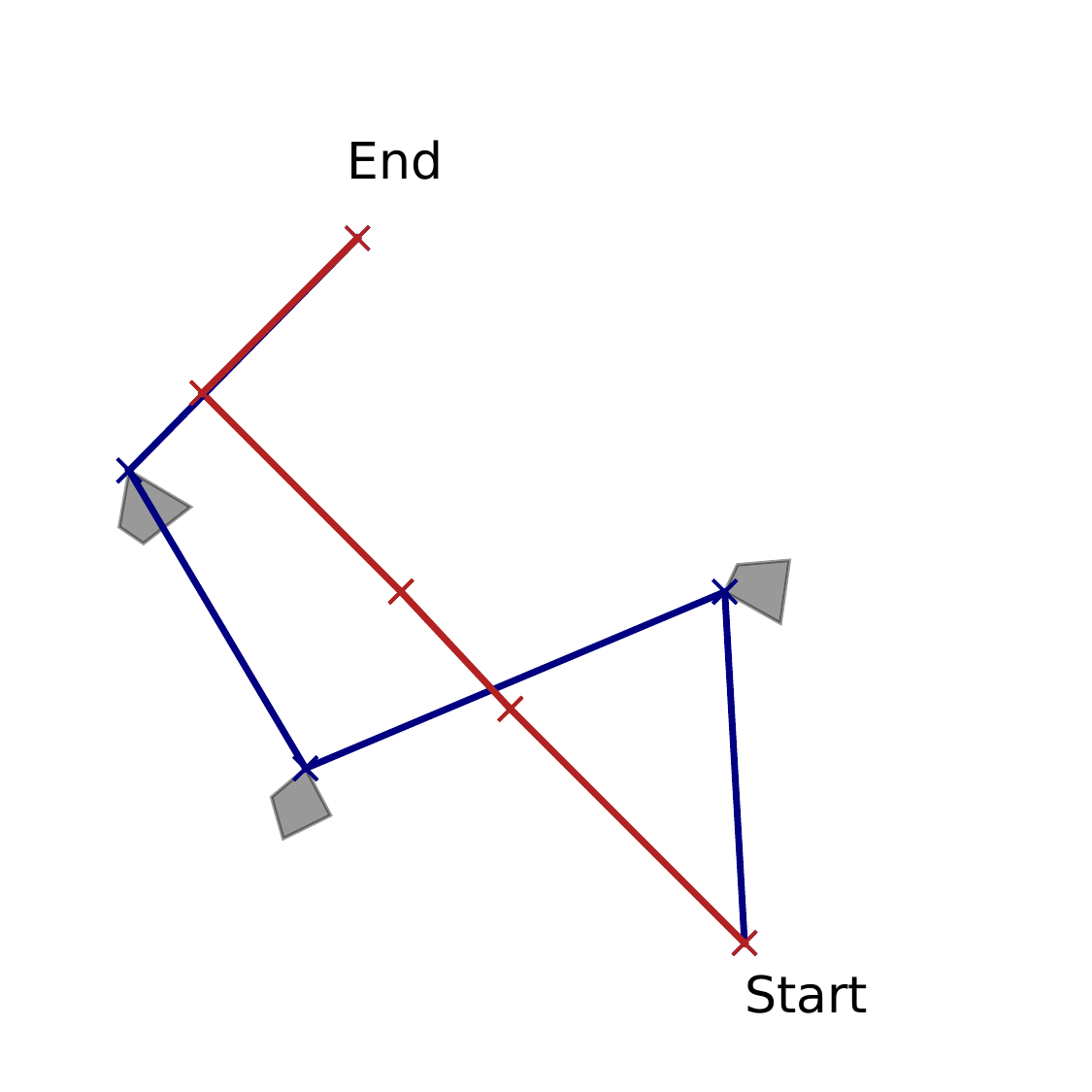}}\
    \,
    \subfloat[The UAV takes photos for four regions and refuels once along the route.]{\includegraphics[frame,trim={0cm 0cm 0cm 0cm},clip,width=0.45\columnwidth]{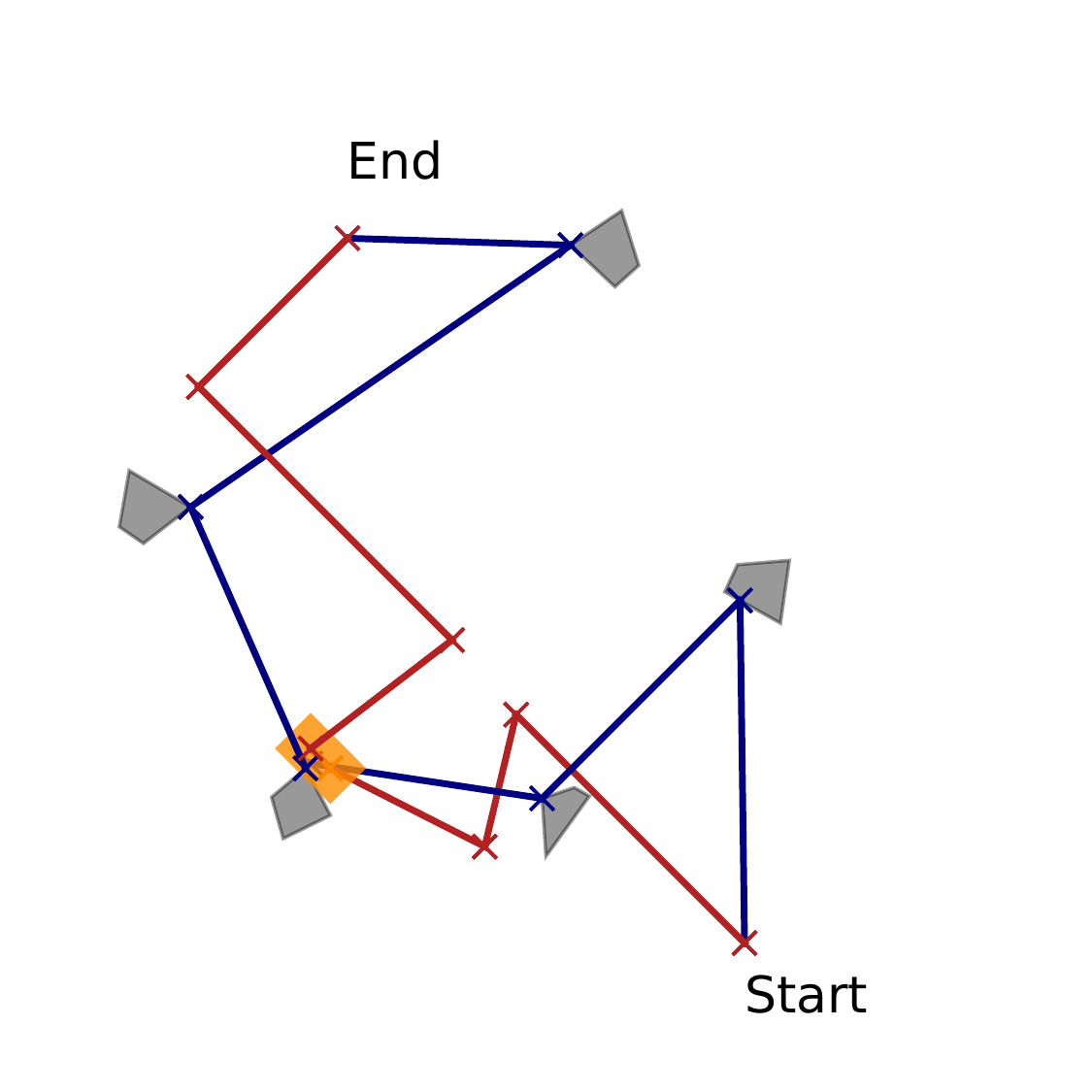}}
    \,
    \subfloat[The UAV takes photos for ten regions and refuels twice along the route.]{\includegraphics[frame,trim={0cm 0cm 0cm 0cm},clip,width=0.45\columnwidth]{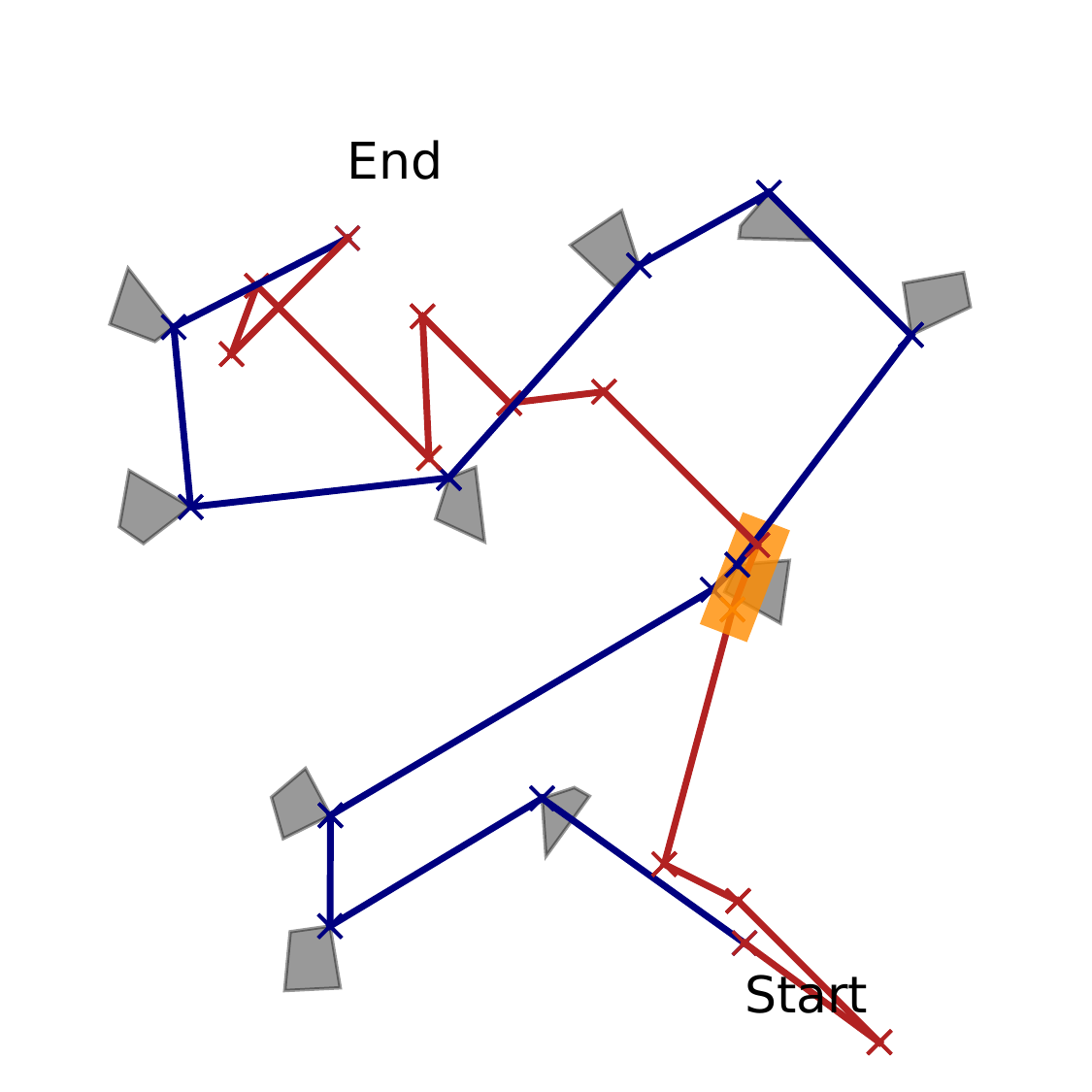}}
    \,
    \subfloat[Two UAVs take photos for eight regions along two different routes. While one UAV does not need refueling, the other one refuels once.]{\includegraphics[frame,trim={0cm 0cm 0cm 0cm},clip,width=0.45\columnwidth]{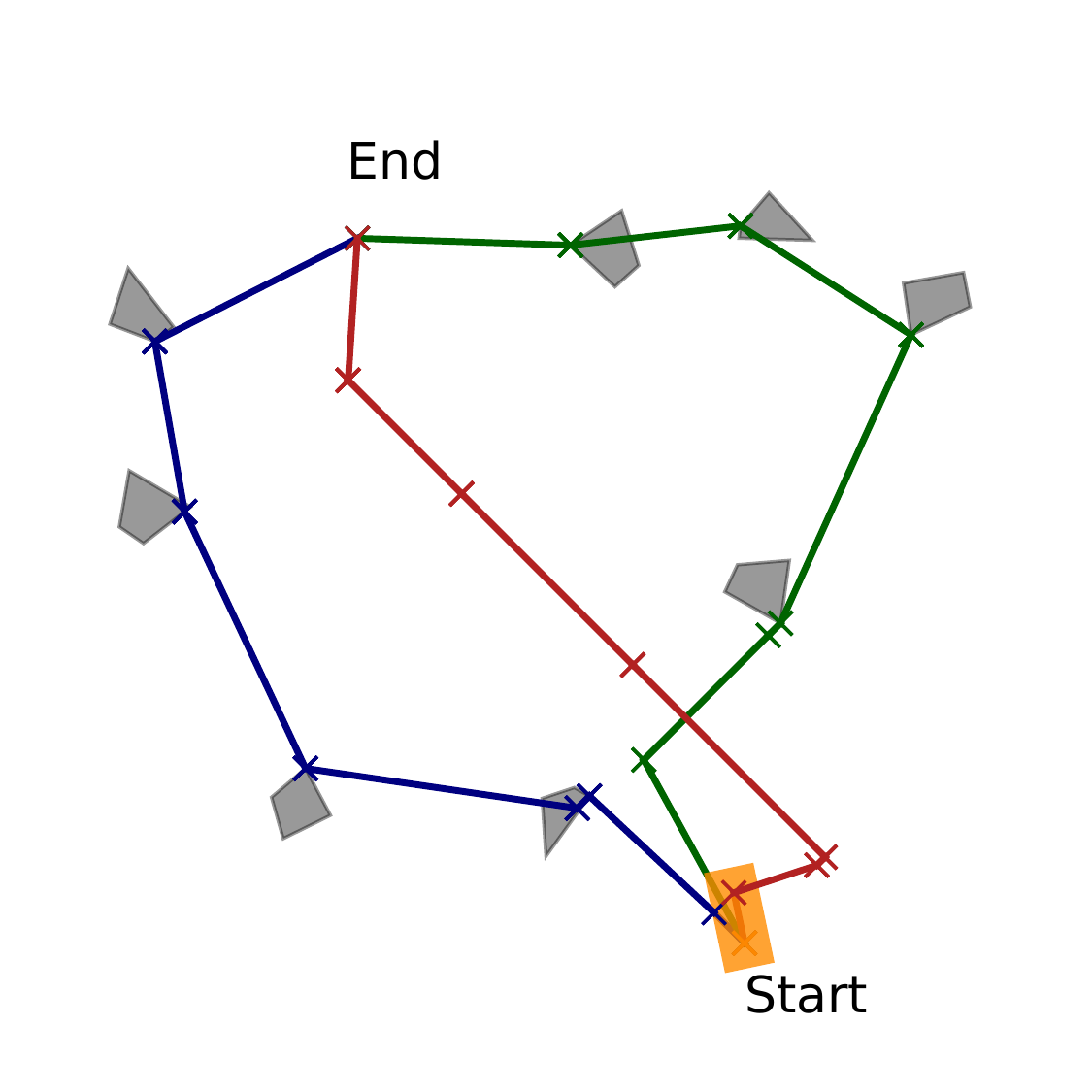}}
    \caption{\small \small Air refueling examples with different numbers of UAVs and regions to take photos:  the regions for taking photos are gray polygons; all the examples consider one UAV (blue) and one tank plane (red) except example (d), which has an additional UAV (green); all the UAVs (blue) are fueled up (i.e., 100 units) in the beginning except the second UAV (green) in domain (d), whose fuel is 10 units. When the plane is refueling, the routes are makred in yellow.}
    \label{fig:air_execution}
\end{figure}

\subsection{Air Refueling Domain} \label{section:results:air}
In this domain, autonomous Unmanned Aerial Vehicles (UAVs) need to take pictures of several regions before landing at the destination location. Since a UAV has limited fuel, it needs to refuel in-air from a tanker plane. This problem is difficult since it requires reasoning on the optimal ordering of visiting all the regions and also coordinating the UAVs and the tank planes to take necessary refueling. When multiple UAVs are in a mission, we should also effectively dispatch the photo-taking tasks such that the makespan is minimized. The maximum velocity of the tank plane is $20m/s$. While flying, UAVs can fly with the velocity up to $30m/s$, and the fuel decreases at $2\text{unit}/s$. Refueling requires the distances between UAVs and tanks planes to be less than $10m$. When an UAV is refueling, the maximum allowable velocity is $5m/s$, and the fuel increases at $10\text{unit}/s$. While the tank capacity of UAVs is $100$ units, we assume the tank plane has enough fuel during missions.

We experimented with this domain on four examples with different numbers of regions and UAVs, as shown in Figure~\ref{fig:air_execution}. The UAVs and the plane start from the same spot and should arrive at the same destination. While there is only one UAV in the examples (a), (b), and (d), we add another UAV in example (d). All the examples only have one tank plane. While our method succeeds in finding feasible solutions in two seconds for all the examples, Scotty spends much more time on (a) and (b) and fails to solve the other two examples within $10$ minutes, which require more complex coordination on visiting a larger number of regions. It is interesting to note that our first solutions are already better than the Scotty solutions, and the makespans of our final solutions are mostly half of those of Scotty. This is because the delete-relaxation heuristics in Scotty misguided its greedy search when energy resources (i.e., fuel) are in this domain, which prevents Scotty from being effective or efficient in this domain.

\subsection{Truck-and-Drone Delivery Domain}\label{section:results:delivery}

In this domain, we consider a fleet of delivery trucks, each equipped with a couple of drones, and the drone and truck both make deliveries to the customers. While trucks can travel between depots through highways or roads, the drone can fly freely in obstacle-free regions or land on trucks to take a ride. When trucks are driving on the road, they should follow the minimum and maximum speed limits as well as the directions, which prevents trucks from violating the traffic rules such as making U-turns on highways. Drones are more flexible, but they are slower, and the travel distance is limited by their battery capacity. In this domain, we look for a plan to deliver all the packages in the shortest time. Figure~\ref{fig:truck_drone} shows an example of the truck-and-drone delivery domains between two depots, in which the two trucks loaded with packages and drones are driving towards each other on a two-way street. Unfortunately, the package destinations are not on the road ahead, and the trucks cannot turn around. A reasonable plan is that the packages are swapped to the other truck by using the drones to cross the street, and then the truck and drone on the other side continue delivery.

\begin{figure}[]
    \centering
    \includegraphics[trim={0.65cm 2.6cm 0.15cm 3cm},clip,width=0.9\columnwidth]{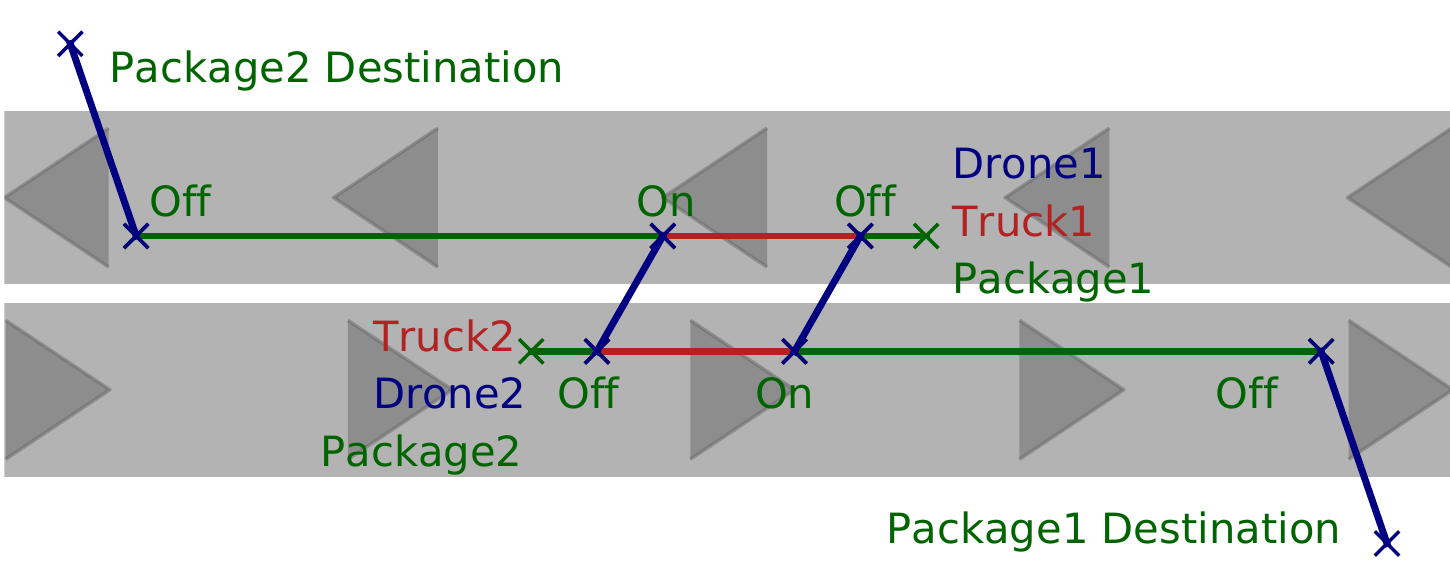}
    \caption{\small Examples of two trucks and two drones delivering two packages:  their initial positions and the package destinations are marked; the truck routes are in red or green and start from the bottom left and the top right, respectively; the routes are in green if the trucks are carrying drones; the routes of drones flying are in blue.}
    \label{fig:truck_drone}
\end{figure}

We test on a map with five depots and ten highways between these depots. Each road is straight and around 10km long with a speed limit of 30-60km/h. The drone can fly with a maximum speed of 5km/h. We assume no obstacle for drones in these examples. The experimental results of four truck-and-drone delivery examples are shown in Table~\ref{tab:results}. While the packages can be delivered at any time in the first three examples, Delivery (d) requires the packages to be delivered within certain time windows specified as a QSP. As we noticed, Scotty does not make progresses to carrying drones to the deport near the drop-off locations and thus fails to solve any of these problems.  It can be seen from the $t_1$ column that our method is able to find the solution very quickly within several seconds. The optimal solutions can also be found in a very short time, $t_*$ for both (a) and (b). In domains (c) and (d), in which we have two trucks, four drones, and four packages, even though it fails to prove the optimality of the incumbent in 10 minutes, their returned solutions largely reduce the makespan of the first returned solutions.

\subsection{MILP Model Study}
Now, we study the MILP models of the three benchmarked domains. Table~\ref{tab:results} shows the numbers of integer variables $\mathcal{V}_I$, continuous variables $\mathcal{V}_C$, and constraints $C$ in our original encoding (Section~\ref{section:milp}), as well as those (i.e., $\mathcal{V}_C'$, $\mathcal{V}_I'$, C') in the model that has been presolved by Gurobi. Gurobi presolves a MILP model by compiling it into a smaller model with equivalent feasible and optimal solutions as the original. As we can see in Table~\ref{tab:results}, the presolved models reduce about $20\%$ continuous variables, $20 \sim 40\%$ integer variables, and $30 \sim 50 \%$ constraints in most examples. We also observed that presolving takes less than $0.1s$ in our experiments.

As the Mars domain does not have many discrete state variables or actions, the numbers of its discrete variables and continuous variables are roughly the same. When it comes to the air refueling domain with more than $8$ regions to visit or the truck-and-drone delivery domain with a large number of discrete variables to indicate trucks are on a certain highway, we observe the number of discrete variables is $2 \sim 5$ times than that of continuous variables. We also note that it is more difficult to find provably optimal solutions in the MILP problems with more integer variables. While Gurobi can find a solution for all the examples in $2s$ even for Delivery (d), which has $1511$ variables and $34130$ constraints, the largest problem we can prove optimality within the runtime limit is Delivery (b), which has $537$ variables and $7610$ constraints.

\section{Conclusions and Future Work}\label{section:conclusion}
In this paper, we presented a mixed discrete-continuous planning approach that fixes the action number of the automaton runs and encodes the corresponding finite-step hybrid planning problem as a MILP. Our complexity analysis shows that the number of the MILP variables and constraints at each step increases linearly with the product of the number of linear constraints involved in each condition and the number of operators and variables. By leveraging the state-of-the-art MILP optimizer Gurobi, our method is able to efficiently find provably optimal or high-quality solutions for challenging mixed discrete-continuous planning problems. This was supported by our experimental results against Scotty on the Mars transportation domains, the air refueling domains, and the truck-and-drone delivery domains.

In this paper, we show how to deal with temporally concurrent goals modeled in QSPs with our MILP approach. For future work, we plan to extend our method to support the full features of STL, which can model more expressive desired systems behaviors. We also would like to explore and solve more real-world applications with this extension.

\subsubsection*{Acknowledgements.} This project was funded by the Defense Advanced Research Projects Agency under Grant Contract No. N16A-T002-0149.

\bibliographystyle{ACM-Reference-Format}
\bibliography{bib}

\end{document}